\newtheorem{example}{Example}
\newtheorem{theorem}{Theorem}
\newtheorem{definition}{Definition}
\newtheorem{assumption}{Assumption}
\newtheorem*{proof}{Proof}
\title{ER: Equivariance Regularizer for \ 
Knowledge Graph Completion}
\author{
Zongsheng Cao \textsuperscript{\rm 1,2},
   Qianqian Xu \textsuperscript{\rm 3,*},
   Zhiyong Yang \textsuperscript{\rm 4},
   Qingming Huang \textsuperscript{\rm 3,4,5,6,*}
}
\begin{document}

\maketitle

\begin{abstract}
 Tensor factorization and distanced based models  play  important roles in knowledge graph
completion (KGC). However,  the relational matrices in KGC methods often induce a high model complexity, bearing a high risk of overfitting. As a remedy, researchers propose a variety of different regularizers such as the tensor nuclear norm regularizer. 
Our motivation  is based on the observation
that the previous work only focuses on the ``size" of the parametric space, while leaving the implicit semantic information widely untouched.
To address this issue, we propose a new
 regularizer, namely, 
Equivariance Regularizer  (\textbf{ER}), 
    which can suppress overfitting by leveraging the implicit semantic information. Specifically, ER can enhance the generalization ability of the model by employing the semantic equivariance between the head and tail entities. Moreover, it is
a generic solution for both
 distance based models and tensor factorization based models.  The experimental results indicate a clear and substantial improvement over the state-of-the-art relation prediction methods.
\end{abstract}

\section{Introduction}
 Knowledge Graph (KG) represents a collection of interlinked descriptions of entities, namely, real-world objects, events, and abstract concepts.
  Knowledge graphs are applied for a wide spectrum of applications ranging from question answering \cite{DBLP:journals/corr/abs-2012-15484}, natural
language processing \cite{DBLP:conf/acl/ZhangHLJSL19},  computer
vision \cite{DBLP:conf/cvpr/MarinoSG17} and recommendation systems \cite{DBLP:conf/cikm/WangZWZLXG18}. However, the knowledge graph is usually incomplete with  missing relations between entities. 
 To  predict missing links between entities based on known
links effectively, 
a key step, known as Knowledge Graph Completion (KGC) has attracted  increasing attention from the community.

There are mainly two important branches of KGC models: distance based (DB) models and tensor factorization based (TFB) models.
As for the former, they  use the Minkowski distance to
 calculate  the similarity between vectors of entities,
 and they can achieve reasonable performance relying on the geometric significance.
As for the latter, they 
treat knowledge graph completion as a tensor
completion problem so that these models are highly expressive in theory. 
A noteworthy fact is that the relation-specific matrices in DB and TFB models contain lots of
parameters, which can embrace rich potential relations. However,  the performance of such models  usually suffers from the overfitting problem since the relational matrices induce a high
model complexity. For example,  \cite{DBLP:journals/tkde/WangMWG17} has shown  models such as RESCAL \cite{2011RESCAL} are overfitting.

To address this challenge, researchers propose various regularizers. The
squared Frobenius norm regularizer is a popular choice  \cite{DBLP:conf/nips/SalakhutdinovS10,yang2014embedding,trouillon2016complex} for its convenience in applications.
However, since the regularizer  only imposes constraints to limit the parameter space (e.g., entities  and  relations),  it cannot effectively improve the performance of some models with more implicit constraints  \cite{conf/iclr/RuffinelliBG20}  (e.g., RESCAL). 
Futhermore,  
\cite{lacroix2018canonical} proposes a tensor nuclear p-norm regularizer,  which encourages to use the matrix trace norm in the matrix completion problem \cite{DBLP:conf/nips/SrebroRJ04,DBLP:journals/focm/CandesR09}. 
 Unfortunately, it is only suitable for
canonical polyadic (CP) decomposition \cite{Hitchcock1927The} based models such as ComplEx \cite{trouillon2016complex}.
For models with more complex mechanisms such as RESCAL \cite{2011RESCAL}, the tensor nuclear p-norm regularizer cannot utilize the potential semantic information so that it is weak in suppressing overfitting.
Recently, \cite{DBLP:conf/nips/ZhangC020} proposes a regularizer called DURA  for TFB models. It essentially imposes constraints on linked entities, which  is deficient in exploring  latent semantic relations.
In summary, most of the aforementioned methods overlook the latent semantic relations and could only improve some specific models for KGC.
How to find an efficient regularizer embracing both DB and TFB models remains wide open.

\begin{figure}
	\centering
\includegraphics[width=2.8in]{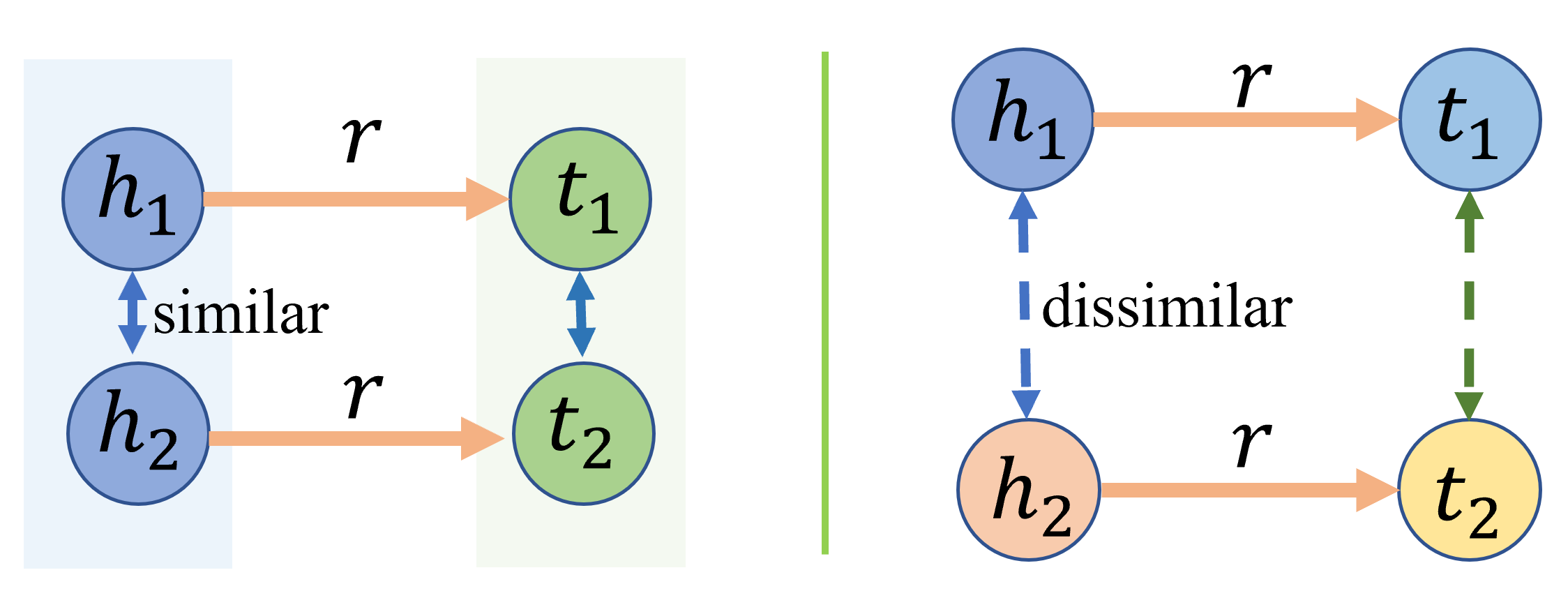}
\caption{An illustration for the equivariance of proximity (left part in the figure) and dissimilarity (right part in the figure). $r$ denotes $relation$, $h_i$ denotes $head_i$ and $t_i$ denotes $tail_i$ ($i=1,2$).}\label{Equivariance}
\end{figure}
In this paper, we propose a novel regularizer for both tensor factorization and distance based KGC models called
Equivariance Regularizer  (\textbf{EA}). As mentioned above, our motivation  is based on the observation
that previous work only focuses on limiting the parameter space, overlooking the potential  semantic relation between entities. 
 For example, given triples $(head_1,relation,tail_1)$ and $(head_2,relation,tail_2)$ shown in FIgure \ref{Equivariance},   if the semantics of $head_1$ and $head_2$ are similar,  the semantics of $tail_1$ and $tail_2$ respectively linked by same $relation$ should be also similar generally, and vice versa.
 Motivated by this,
  different from previous work, we attempt to  utilize the latent semantic information between entities to 
 suppress overfitting,   
which can be  realized  on top of the equivariance of proximity and dissimilarity jointly. In light of this, ER can be applied to a variety of DB and TFB models, including  RESCAL,  ComplEx and RotatE \cite{sun2019rotate:}. Experiments show that ER can
yield consistent and significant improvements on datasets for the knowledge graph completion task.

 Our contributions can be summarized as follows. (1) To the best
of our knowledge, our regularizer is the first to focus on exploring potential semantic relations based on the equivariance of proximity and dissimilarity. 
  (2) We provide a reformulation of our regularization, which shows the relationship between our work and traditional regularizers. Moreover, ER is a flexible regularizer that can be applied to both DB and TFB models.  (3) We evaluate our model to address the challenge of relation prediction tasks for a wide variety of real-world datasets,  
and show that ER can produce sharp improvement on benchmarks.


\section{Related Work}\label{Related Work}
Knowledge graph embedding is  an important research direction in representation
learning.  
Therefore, a number of approaches have been developed for embedding
KGs. We roughly divide previous KGC models into  distance based models and tensor factorization based models.

Distance based (DB) models can model relations and entities by  matrix or vector  maps for a triple $(head, relation,tail)$. Specifically, Minkowski distance is applied to calculate the similarity between vectors of entities. Translational methods proposed first by TransE \cite{bordes2013translating} are widely used embedding
methods, which interpret relation vectors as translations in vector space, i.e., $head + relation \approx tail$. A number of models  aiming to improve TransE are proposed subsequently, such as TransH \cite{wang2014knowledge}, TransR \cite{lin2015learning} and TransD \cite{ji2015knowledge}. They use the score function like the formulation of $s(\bm{x}_i,\bm{R}_j,\bm{x}_k)= -||f(\bm{x}_i,\bm{R}_j,\bm{x}_k)||$,
where $f$ is a model-specific function. 
Recently, there are some  effective DB models  proposed such as RotatE \cite{sun2019rotate:}, \textsc {RotH} \cite{chami2020low} and OTE \cite{tang-etal-2020-orthogonal}, which have made great progress  in the sense of geometric interpretation.
However, some DB models (e.g., TransR \cite{lin2015learning}) have shortcomings in the expressive ability of the model since the DB models perform KGC based solely on observed facts rather than utilizing latent semantic information.

Tensor factorization based (TFB) models formulate the KGC task as a third-order binary tensor completion
problem.  Denote 
the head entity (tail entity) 
as $\bm{x}_i(\bm{x}_k)$, and denote the matrix representing relation as $\bm{R}_j$. Suppose $\bm{Re}(\cdot)$ denotes the real
part, and $\overline{\cdot}$ denotes the conjugate of a complex matrix. Then TFB models factorize the third-order tensor as 
$\bm{Re}(\overline{\bm{H}}\bm{R}_j\bm{T}^{\top})$. Specifically, the score functions are defined
as $s(\bm{x}_i,\bm{R}_j,\bm{x}_k) = \bm{Re}(\overline{\bm{x}}_i \bm{R}_j \bm{x}^{\top}_k)$. RESCAL \cite{2011RESCAL} proposes the three-way rank-r factorization over each relational slice of knowledge graph tensor. In this method a large amount of parameters are contained in the relation-specifific matrices and it bring a risk of overfifitting thereby.
 By restricting relation matrix  to be diagonal
for multi-relational representation learning, DistMult \cite{yang2014embedding}
proposes a simplified bilinear formulation but it cannot handle asymmetric relations.
Going a step further, ComplEx \cite{trouillon2016complex} 
proposes  to embed entities  using complex vectors, which can capture symmetric and antisymmetric
relations.
Moreover, TuckER \cite{balavzevic2019tucker}  learns embeddings by outputting a core tensor and embedding vectors
of entities and relations. But due to the existence of overfitting \cite{2011RESCAL}, their performance lags behind the DB model. 
There are also some neural network models such as \cite{DBLP:journals/corr/abs-2101-01669},
\cite{AS_GCN} \cite{DBLP:conf/www/JinHL021} and \cite{nathani2019learning}, but they also have the risk of overfitting due to the huge number of parameters.

 To  tackle the overfitting problem of KGC models and enhance the performance of these models,
  researchers propose various regularizers. In the previous work,  squared Frobenius norm ( L2 norm) regularizer  is usually applied in TFB models \cite{2011RESCAL,yang2014embedding,trouillon2016complex}. However, TFB models can not achieve comparable performance as distance
based models \cite{sun2019rotate:,2019LearningHierarchyAware}.
Then \cite{lacroix2018canonical} proposes to use the tensor nuclear 3-norm \cite{2014Nuclear} (N3) as a regularizer. However, it is designed for some specific models such as  ComplEx, and it is
not appropriate for other general models such as RESCAL.
Moreover, \cite{DBLP:conf/pkdd/MinerviniCMNV17} uses a set of model-dependent soft constraints imposed on the predicate embeddings to model the equivalence and inversion patterns. There are also some  methods such as \cite{DBLP:conf/acl/WangWGD18,DBLP:conf/pkdd/MinerviniCMNV17,2017Adversarial}  leveraging external background knowledge to achieve the regularization.
\cite{DBLP:conf/acl/GuoWWWG15} enforces the embedding space to be semantically smooth, i.e., entities belonging to the same semantic category will lie close to each other in the embedding space.
\cite{DBLP:conf/ijcai/XieLS16} proposes that
entities should have multiple representations in different types.
In order to impose the prior belief for the structure in the embeddings space, \cite{DBLP:conf/acl/WangWGD18} imposes approximate entailment constraints on relation embeddings and non-negativity constraints on entity embeddings. Recently, \cite{DBLP:conf/nips/ZhangC020} proposes the duality-induced regularizer for TFB models. However, it 
is derived from  the score function of DB models, so
it cannot significantly improve the performance of DB model. 

Based on the  survey above, so far, we can see that previous regularizers  pay more attention to constraints of individual entities or relations rather than the latent semantics of the interaction between entities.  Therefore, we attempt to design a regularizer applied for both DB and TFB models  by employing the potential relations.


\section{Methodology}\label{Methodology}
In this section, we introduce a novel regularizer called  Equivariance Regularizer (\textbf{ER}) for tensor
factorization and distance based KGC models.
Before introducing the methodology, we first provide a brief review of  the knowledge graph completion.
After that, we introduce the ER based on the equivariance of proximity 
 and dissimilarity respectively, in which we
explain their working mechanism. 
  Then we propose the overall ER by integrating proximity and dissimilarity, and futher apply ER to knowledge graphs without entity categories.
Finally, we give
 a theoretical analysis  for the reformulation of ER. 

\textbf{Knowledge graph completion (KGC).} Given a set $\mathcal{E}$ of entities
and a set $\mathcal{R}$ of relations, a knowledge graph $\mathcal{G}\subseteq \mathcal{E}\times \mathcal{R}\times\mathcal{E}$ is  a set of subject-predicate-object triples generally. A KGC model aims to predict missing links between entities based on known links automatically in the knowledge graph. For the convenience of expression, 
 we denote embeddings of entities 
and relations 
as  $\bm{x}_i$ (head entity) or $\bm{x}_k (\text{tail entity})\in \mathbb{R}^{d_e}$ and $\bm{R}_j\in\mathbb{R}^{d_r\times d_r}$ in a low-dimensional vector space, respectively; here $d_e,d_r\in\mathbb{N}^+$ are  the embedding size. Each particular model uses
a scoring function $s$ to associate a score $s(i,k,j)$ with each potential triple
$(i,j,k)\in\mathcal{E}\times\mathcal{R}\times\mathcal{E}$. Then we have the scoring function as $s(i,j,k) = f(\bm{x}_i,\bm{R}_j,\bm{x}_k)$. 
The higher the score of a triple is, the more likely it is considered to be true
by the model.
Our goal is to predict
true but unobserved triples based on the information in $\mathcal{G}$.

Generally, the score functions of the TFB and DB models are as follows, respectively:
$f_{j}^{TFB}(i,j, k)=\boldsymbol{\operatorname { R e }}\left(\overline{\boldsymbol{x}}_{i} \boldsymbol{R}_{j} \boldsymbol{x}_{k}^{\top}\right)=\boldsymbol{\operatorname { R e }}\left(\left\langle\boldsymbol{x}_{i} \overline{\boldsymbol{R}}_{j},
 \boldsymbol{x}_{k}\right\rangle\right),
  f_{j}^{DB}(i, j,k)=-\left\|\boldsymbol{x}_{i} \mathbf{R}_{j}-\boldsymbol{x}_{k}\right\|,$
where $\bm{Re}(\cdot)$ denotes the real
part of the complex number, and $\overline{\cdot}$ denotes the conjugate of a complex matrix.

Notice that the relation-specific matrices play important roles in modeling relations but meanwhile they may also carry some side effects.
On one hand, the relation-specific matrices in TFB and DB models contain lots of parameters, which makes modeling complex relations available. On the other hand, increasing the model complexity also bears the risk of overfitting.

Researchers have proposed a series of regularizers to suppress overfitting.
The basic paradigm of the regularized formulation is  as follows: 
\begin{equation}
\min \sum_{(\bm{x}_i,\bm{R}_j,\bm{x}_k)\in \mathcal{S}} L\left(\bm{x}_i,\bm{R}_j,\bm{x}_k\right)+\lambda g(\bm{x}_i,\bm{R}_j,\bm{x}_k),
\end{equation}
where $L\left(\cdot\right)$ represents the loss function in KGs, $\mathcal{S}$ is the set of triples in KGs, $\lambda$ is a fixed parameter,  and $g(\cdot)$ is
the regularization function.


\subsection{Proximity Based Equivariance}\label{SARA based on Semantic Relevance }

Notice that the previous methods (e.g., \cite{DBLP:conf/acl/GuoWWWG15},  \cite{DBLP:conf/icml/LiuWY17} \cite{DBLP:conf/ijcai/XieLS16}, \cite{DBLP:journals/ws/PadiaKFF19},
\cite{DBLP:conf/pkdd/MinerviniCMNV17}) pay more attention to explicit relational constraints, such as limiting the space of entities and relations. Unfortunately, the  potential relational constraints such as multiple-hop space are  not taken seriously. Therefore, we aim to utilize potential relations based on this perspective.

 Let us take an example.
As shown in Figure \ref{regularizer illustration}, \textbf{London} and \textbf{Paris} have  similar semantics since they are all subordinate to the concept of capital (i.e., \textbf{Paris} is the capital of \textbf{France} and \textbf{London} is the capital of \textbf{England}).  In this way, we expect that \textbf{France} and \textbf{England} linked by  the same relation (\textbf{capital$\_$of}) should have similar semantics, since they are all subordinate to the concept of country. This suggests that embedding proximity is equivariant across head and tail entities, as expressed in the following assumption:
\begin{assumption}
If $\bm{x}_i$ and $\bm{x}_k$ have similar semantics ($\boldsymbol{x}_k \stackrel{near}{\longrightarrow}\boldsymbol{x}_i$),  the entities $\bm{x}_i\boldsymbol{\overline{R}}_j$ and $\bm{x}_k\boldsymbol{\overline{R}}_j$ 
which are produced by $\bm{x}_i$ and $\bm{x}_k$ linking to the same relation $\boldsymbol{R}_j$,  satisfy the equivariance of proximity ($\bm{x}_k\boldsymbol{\overline{R}}_j\stackrel{near}{\longrightarrow}\bm{x}_i\boldsymbol{\overline{R}}_j$).
\end{assumption}



Motivated by this,  we attempt to leverage the latent semantic relations in KGs to tackle the overfitting problem.
First, 
 for any entities  $\boldsymbol{x}_i$ and $\boldsymbol{x}_k$ (e.g., \textbf{London} and \textbf{Paris}) which have similar semantics,  we have the \textbf{norm constraint}: 
\begin{equation}\label{other constraint}
\mathcal{R}_1=||\boldsymbol{x}_i||+||\boldsymbol{x}_k||,
\end{equation}
where $||\cdot||$ can represent any norm of the vector.  The function of the norm constraint here is to limit the embedding of entities in a similar semantic space.

   Recall the given triples $(\bm{x}_i,\bm{R}_{j_1},\bm{x}_p)$ and $(\bm{x}_k, \bm{R}_{j_1},\bm{x}_q)$  in the example above. 
  We can naturally deduce the the equivariance of proximity of $\boldsymbol{x}_i\overline{\mathbf{R}}_{j_1}$ (\textbf{England}) and $\boldsymbol{x}_k\overline{\mathbf{R}}_{j_1}$ (\textbf{Franch}) from $\bm{x}_i$ (\textbf{London}) and $\bm{x}_k$ (\textbf{Paris}) through the same relation $\mathbf{R}_{j_1}$ (\textbf{capital$\_$of}).
  Then we can seek  such a semantics constraint on them to take into account the equivariance of proximity, 
   which we name the \textbf{proximity equivariance constraint}:
\begin{equation}\label{two-order Semantic Relevance}
\mathcal{R}_2=\sum_{i=1}^n\sum_{k=1}^n||\boldsymbol{x}_i\overline{\mathbf{R}}_{j_1}-\boldsymbol{x}_k\overline{\mathbf{R}}_{j_1}||a_{ik},
\end{equation}
where $c_{\bm{x}_i}$ and $c_{\bm{x}_k}$ are the category labels of entities $\bm{x}_i$ and $\bm{x}_k$   respectively, and
\begin{equation}\label{a_ik Semantic relevance}
  a_{ik}=\left\{\begin{array}{ll}
           1,if\  c_{\bm{x}_i}=c_{\bm{x}_k},\\
  0,otherwise.
         \end{array}\right.
\end{equation}

The function of the equivariant proximity constraint is to limit the embedding of link-derived entities to locate in a similar semantic space.
\begin{figure*}
\centering
\subfigure[A KG.]{
\begin{minipage}[t]{0.33\linewidth}
\centering
\includegraphics[width=1.8in]{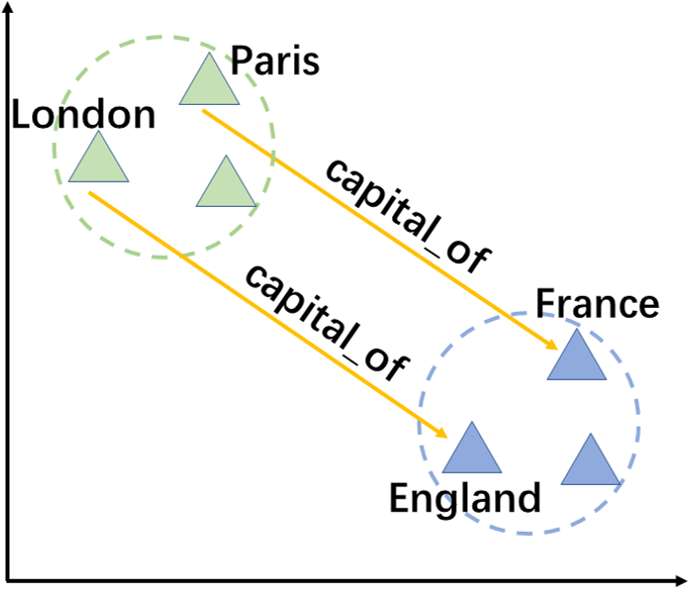}
\label{PIC_KG}
\end{minipage}%
}%
\subfigure[ Similar embeddings.]{
\begin{minipage}[t]{0.33\linewidth}
\centering
\includegraphics[width=1.8in]{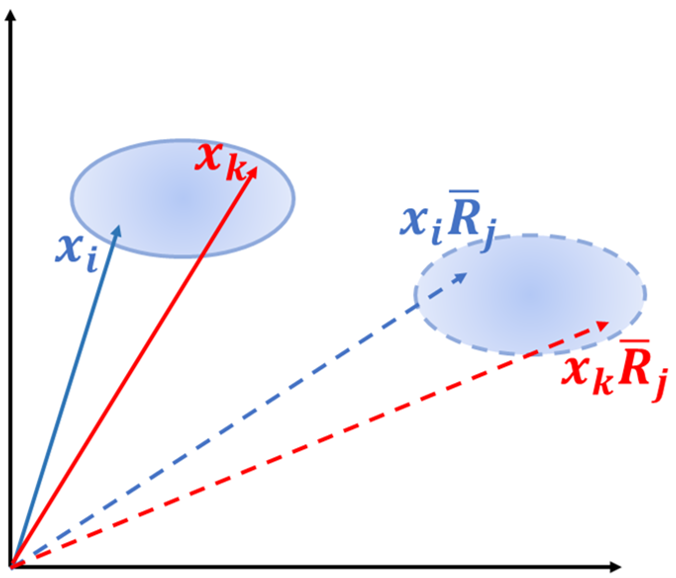}
\label{PIC_DURA}
\end{minipage}
}%
\subfigure[ER.]{
\begin{minipage}[t]{0.33\linewidth}
\centering
\includegraphics[width=1.8in]{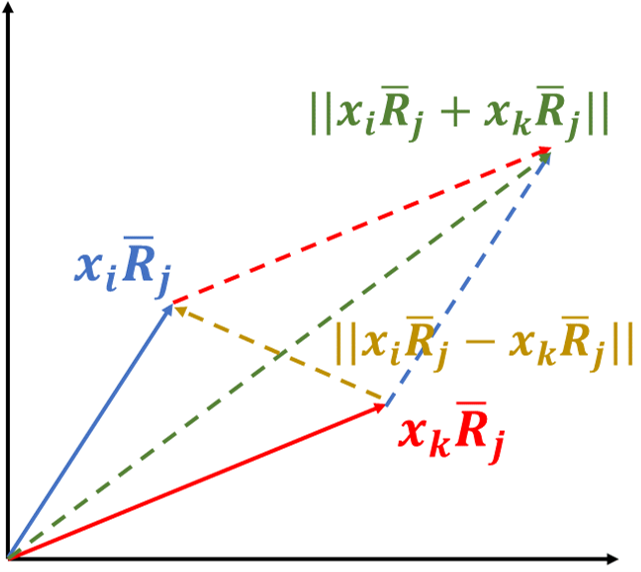}
\label{PIC_MARA}
\end{minipage}
}%
\caption{ An illustration of why ER can improve the performance of KGC models when
the embedding dimensions are 2.
Figure \ref{PIC_KG} shows that there are two triples (London, capital$\_$of, England) denoted as $(\boldsymbol{x}_i,\boldsymbol{R}_j,\boldsymbol{x}_p)$ and (Paris, capital$\_$of, France) denoted as $(\boldsymbol{x}_k,\boldsymbol{R}_j,\boldsymbol{x}_q)$.  Figure \ref{PIC_DURA} shows that  if two entities are  similar in semantics, the other entities connected to them through the same relation separately  satisfy
equivariant proximity in semantics.  Figure \ref{PIC_MARA} shows that ER can realize regularization function with $\|\boldsymbol{x}_i\overline{\mathbf{R}}_j-\boldsymbol{x}_k\overline{\mathbf{R}}_j||$ based on the equivariance of proximity, and  realize regularization function with $\|\boldsymbol{x}_i\overline{\mathbf{R}}_j+\boldsymbol{x}_k\overline{\mathbf{R}}_j||$ based on the equivariance of dissimilarity, respectively.
} \label{regularizer illustration}
\end{figure*}
Then the new model performs the embedding task by minimizing the
following objective function:
\begin{equation}\label{formular for SARA based on Semantic Relevance}
 \min \sum_{(\bm{x}_i,\bm{R}_j,\bm{x}_k)\in \mathcal{S}}
 L\left(\bm{x}_i,\bm{R}_j,\bm{x}_k\right)+\lambda (\mathcal{R}_1+\mathcal{R}_2).
\end{equation}

\subsection{Dissimilarity Based Equivariance}\label{SARA based on Semantic Irrelevance}

We have proposed ER based on the equivariance of proximity  between entities in the above. From another perspective, notice that not all head entities  in triplets are semantically similar, i.e., some entities are semantically distant  in the knowledge graph. Motivated by this, we will give another expression of ER based on the equivariance of dissimilarity.

 Consider such an example. Given two triples (\textbf{London}, \textbf{attribute$\_$of}, \textbf{city}) and (\textbf{Mississippi River}, \textbf{attribute$\_$of}, \textbf{river}),  $\bm{x}_i (\textbf{London})$ \text{and} $\bm{x}_k$ (\textbf{Mississippi River}) are not close in semantics, i.e., they are semantically distant. Since they  have a public relation $\bm{R}_{j_1}$ (\textbf{attribute$\_$of}), we can  infer that  $\boldsymbol{x}_i\overline{\mathbf{R}}_{j_1}$ (\textbf{city}) and $\boldsymbol{x}_k\overline{\mathbf{R}}_{j_1}$ (\textbf{river}) satisfy the equivariance of dissimilarity in semantics.
 This suggests
that embedding dissimilarity is equivariant across head and tail
entities, as expressed in the following assumption:

 \begin{assumption}
 if $\boldsymbol{x}_i$ and $\boldsymbol{x}_k$   are semantically distant $\boldsymbol{x}_k (\stackrel{far}{\longrightarrow}\boldsymbol{x}_i$),   
 the entities $\bm{x}_i\boldsymbol{\overline{R}}_j$ and $\bm{x}_k\boldsymbol{\overline{R}}_j$, 
which are produced by $\bm{x}_i$ and $\bm{x}_k$ linking to the same relation $\boldsymbol{R}_j$,
 should be equivariant dissimilarity ($\bm{x}_k\boldsymbol{\overline{R}}_j\stackrel{far}{\longrightarrow}\bm{x}_i\boldsymbol{\overline{R}}_j$).
 \end{assumption}


First of all, the semantic space of the entities should be bounded to ensure that it will not diverge, so we have the \textbf{norm cosntraint} $\mathcal{W}_1$ same as Eq.(\ref{other constraint}).
 Then we can notice such a fact as shown in Figure \ref{PIC_MARA}: the more futher the semantics of $\boldsymbol{x}_i\overline{\mathbf{R}}_{j_1}$ and $\boldsymbol{x}_k\overline{\mathbf{R}}_{j_1}$ are away from each other, the larger $||\boldsymbol{x}_i\overline{\mathbf{R}}_{j_1}-\boldsymbol{x}_k\overline{\mathbf{R}}_{j_1}||$  is, which means that $||\boldsymbol{x}_i\overline{\mathbf{R}}_{j_1}+\boldsymbol{x}_k\overline{\mathbf{R}}_{j_1}||$ will be smaller in the vector parallelogram for this case. We denote the labels of entity $\bm{x}_i$ and $\bm{x}_k$ as $c_{\bm{x}_i}$ and $c_{\bm{x}_k}$ respectively.
Then we can use the following constraints as the \textbf{dissimilarity equivariance constraint}:
\begin{equation}\label{two-order  Semantic Irrelevance}
\mathcal{W}_2=\sum_{i=1}^n\sum_{k=1}^n||\boldsymbol{x}_i\overline{\mathbf{R}}_{j_1}+\boldsymbol{x}_k\overline{\mathbf{R}}_{j_1}||b_{ik},
\end{equation}
where
\begin{equation}\label{a_ik Semantic irrelevance}
  b_{ik}=\left\{\begin{array}{ll}
           0,\ if\  c_{\bm{x}_i}=c_{\bm{x}_k},\\
           1,\ otherwise.
         \end{array}\right.
\end{equation}
The objective function  could then be formulated as follows:
\begin{equation}\label{formular for SARA based on Semantic Relevance}
 \min \sum_{(\bm{x}_i,\bm{R}_j,\bm{x}_k)\in \mathcal{S}}
 L\left(\bm{x}_i,\bm{R}_j,\bm{x}_k\right)+\lambda (\mathcal{W}_1+\mathcal{W}_2).
\end{equation}

\subsection{Joint Model Based Equivariance}
In the above, we have applied ER to the knowledge graphs with  predefined entity
types. However,
in fact, the entities in some knowledge graphs are not classified into different categories, so we cannot utilize their information about the entity type in this case. Therefore it motivates us to attempt to use a appropriate metric equivalently instead of using entity categories. Based on the  observations above, we have the following
assumption:
\begin{assumption}
If two entities $\bm{x}_i$ and $\bm{x}_k$ have similar embeddings ($\boldsymbol{x}_k \stackrel{near}{\longrightarrow}\boldsymbol{x}_i$), 
they may belong to the same semantic category ($c_{\bm{x}_i}=c_{\bm{x}_k}$).
\end{assumption}

To evaluate  the similarity of two entities $\bm{x}_i$ and $\bm{x}_k$, we propose a  discriminant for their embeddings approximately:
\begin{equation}\label{epsilon hyperparameter}
\begin{aligned}
\left\{
\begin{array}{c}
    c_{\bm{x}_i}=c_{\bm{x}_k}, if\ ||\bm{x}_i-\bm{x}_k||\leq \epsilon_j, \\
   c_{\bm{x}_i}\neq c_{\bm{x}_k}, if\ ||\bm{x}_i-\bm{x}_k|| > \epsilon_j,
\end{array}\right.
\end{aligned}
\end{equation}
where  $\epsilon_j$ is the semantic-similarity parameter for each relation $\bm{R}_j$, which can be learned automatically in ER.  Combining Eq.(\ref{epsilon hyperparameter}) with Eq.(\ref{a_ik Semantic relevance}) and Eq.(\ref{a_ik Semantic irrelevance}), we can apply ER to knowledge graphs without entity categories.  Specifically, if $||\bm{x}_i-\bm{x}_k||\leq \epsilon_j$, we can use ER based on proximity equivariance; otherwise, we can use ER based on dissimilarity equivariance.

Moreover, we notice that there are some instances which does not fit the equivariance of
proximity and dissimilarity. For example, two entities are close in semantics but their linking entities are not so close in semantics. Fortunately, ER  can adjust  $\epsilon_j$ appropriately  larger automatically   in order to make a trade-off. In this case, it can  take into account differences of entities while capturing similarities.

In general, ER can handle knowledge graphs with and without category labels of entities,  which provides a theoretical guarantee that the ER  can be widely used in different knowledge graphs.  Moreover,  we will see ER can be widely adopted in various KGC models and achieve the competitive performance in the experimental part.

\subsection{Theoretic Analysis for ER}\label{Theoretic Analysis}


We have derived the general form of the regularizer based on the equivariance of proximity and dissimilarity, respectively. 
Next, we take the some models
using ER as an example to give the reformulation of the regularizer. 
 First of all, we define the nuclear t-norm of a 3D tensor. 

\begin{definition}
In the knowledge graph completion problem, the tensor nuclear t-norm of $\hat{\mathcal{X}}$ is
\begin{equation}\label{t-noem of X}
\begin{aligned}
\small{
\|\hat{\mathcal{X}}\|_{t}}=\min &\left\{\sum_{d=1}^{D}\left\|\mathbf{p}_{: d}\right\|_{t}\left\|\mathbf{r}_{: d}\right\|_{t}\left\|\mathbf{q}_{: d}\right\|_{t}:\right.\\
& \left.\hat{\mathcal{X}}=\sum_{d=1}^{D} \mathbf{p}_{: d} \otimes \mathbf{r}_{: d} \otimes \mathbf{q}_{: d}\right\},
\end{aligned}
\end{equation}
where $D$ is the embedding dimension, $\bm{p}_{:d}$, $\bm{r}_{:d}$, and $\bm{q}_{:d}$ denote the $d$-th columns of $\bm{x}_i$, $\bm{R}_j$, and $\bm{x}_k$.
\end{definition}
Then we have the following reformulations of ER based on the equivariance for the case of 2-norm. The conclusion for other models  can be analogized accordingly.
\begin{theorem}\label{Theorem1}
Suppose that the model mechanism  $\hat{\bm{X}}=\bm{x}_i\overline{\bm{R}}_j\bm{x}^{\top}_k$ for $j=1,2,\cdots,|\mathcal{R}|$, where $\bm{x}_i,\bm{x}_k$ are real matrices, $\bm{R}_j$ is diagonal. Denote $\mathcal{S}$ as the set of triples in KGs. Then, the following equation holds
\begin{equation}\label{bound1}
\begin{aligned}
\frac{1}{ \sqrt{|\mathcal{R}|}}\small{\min\sum_{(\boldsymbol{x}_i,\boldsymbol{R}_j,\boldsymbol{x}_k)\in\mathcal{S}}}||\boldsymbol{x}_i||_F^2+||\boldsymbol{x}_k||_F^2+\|\boldsymbol{x}_i\mathbf{R}_j-\boldsymbol{x}_k\mathbf{R}_j||_F^2
\\=\|\hat{\mathcal{X}}\|_{2}
\end{aligned}
\end{equation}
\end{theorem}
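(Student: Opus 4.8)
The statement is a reformulation result in the spirit of the DURA analysis of \cite{DBLP:conf/nips/ZhangC020}, so the plan is to turn the minimization over embeddings on the left into the minimization over CP decompositions that defines $\|\hat{\mathcal X}\|_2$ in the Definition, and to match the two using nothing but the scaling invariance of a CP decomposition together with AM--GM and Cauchy--Schwarz. First I would record the routine reductions: since $\bm R_j$ is real and diagonal, $\overline{\bm R}_j=\bm R_j$ and $\bm R_j$ is symmetric, so stacking the head/tail embeddings into matrices $\bm P,\bm Q$ and the diagonals of the $\bm R_j$ into a matrix $\bm R$, the score tensor is $\hat{\mathcal X}=\sum_{d=1}^{D}\bm p_{:d}\otimes\bm r_{:d}\otimes\bm q_{:d}$ with $\bm p_{:d},\bm r_{:d},\bm q_{:d}$ the $d$-th columns, and slice-wise $\hat{\mathcal X}_j=\bm P\,\overline{\bm R}_j\,\bm Q^\top$. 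I would also expand the equivariance term, $\|\bm x_i\bm R_j-\bm x_k\bm R_j\|_F^2=\|\bm x_i\bm R_j\|_F^2+\|\bm x_k\bm R_j\|_F^2-2\,\bm x_i\bm R_j^2\bm x_k^\top$, so that after combining with $\|\bm x_i\|_F^2+\|\bm x_k\|_F^2$ the whole regularizer is written through Frobenius norms of (scaled) embeddings and the bilinear forms that make up $\hat{\mathcal X}$.

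The equality would then be proved as two inequalities. For ``$\ge$'' I would take embeddings attaining the left-hand minimum, read off the CP decomposition above, and use the rescaling $(\bm p_{:d},\bm r_{:d},\bm q_{:d})\mapsto(\alpha_d\bm p_{:d},\beta_d\bm r_{:d},\gamma_d\bm q_{:d})$ with $\alpha_d\beta_d\gamma_d=1$ (which leaves $\hat{\mathcal X}$ fixed) to balance the three column norms; AM--GM then bounds each product $\|\bm p_{:d}\|_2\|\bm r_{:d}\|_2\|\bm q_{:d}\|_2$ by a symmetric combination of squared column norms, and resumming over $d$ (keeping track of how many triples of $\mathcal S$ touch each coordinate) reproduces $\mathcal R_1+\mathcal R_2$; by the variational definition this yields $\sqrt{|\mathcal R|}\,\|\hat{\mathcal X}\|_2\le\min(\text{reg})$. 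For ``$\le$'' I would start from a balanced CP decomposition that is optimal (or $\varepsilon$-optimal) for $\|\hat{\mathcal X}\|_2$, read its three factor matrices as embeddings $(\bm x_i,\bm R_j,\bm x_k)$ with $\bm R_j$ diagonal, check that they realize $\hat{\mathcal X}_{ijk}=\bm x_i\overline{\bm R}_j\bm x_k^\top$, and evaluate $\mathcal R_1+\mathcal R_2$ on this choice; the sum over the relation mode of the per-slice quantities is collapsed into a single Frobenius/$\ell_2$ norm over all modes by Cauchy--Schwarz, which is precisely where the factor $1/\sqrt{|\mathcal R|}$ is produced. Combining the two inequalities gives the claimed identity, and I would note that both are simultaneously tight (equality in AM--GM and in Cauchy--Schwarz hold for the same balanced decomposition), so no slack is lost; the proximity-vs-dissimilarity sign choice in the equivariance term is what makes the cross term in the expansion line up with $\pm2$ times the score.

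The main obstacle is not a conceptual one but the bookkeeping of degrees and constants: the regularizer is \emph{quadratic} in each embedding matrix whereas $\|\hat{\mathcal X}\|_2$ is \emph{multilinear of degree one} in each CP factor, and the only lever bridging them is the scaling freedom of the decomposition; carrying the balancing consistently through all three pieces $\|\bm x_i\|_F^2$, $\|\bm x_k\|_F^2$, $\|\bm x_i\bm R_j-\bm x_k\bm R_j\|_F^2$, while counting how many triples of $\mathcal S$ use a given column, and extracting exactly the power $1/\sqrt{|\mathcal R|}$ rather than some other function of $|\mathcal R|$, is the delicate part. A secondary point to watch is the discrepancy between the cross term $\bm x_i\bm R_j^2\bm x_k^\top$ that appears after expanding the equivariance norm and the score $\bm x_i\overline{\bm R}_j\bm x_k^\top$ used to define $\hat{\mathcal X}$: the argument must either absorb the surplus $\bm R_j$ into the rescaling step or tacitly work in the regime (e.g.\ unit-modulus diagonal entries) in which $\bm R_j^2$ is inert, and this should be made explicit.
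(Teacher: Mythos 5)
Your proposal follows essentially the same route as the paper's proof: both reduce the equivariance term to DURA-type squared-norm terms, apply AM--GM to $\|\bm p_{:d}\|_2^2\|\bm r_{:d}\|_2^2 + |\mathcal R|\,\|\bm q_{:d}\|_2^2$, and use the rescaling freedom of a CP decomposition to meet the balance condition $\|\bm p_{:d}\|_2\|\bm r_{:d}\|_2=\sqrt{|\mathcal R|}\,\|\bm q_{:d}\|_2$ that makes the bound tight against the variational definition of $\|\hat{\mathcal X}\|_{2}$. The two points you flag as delicate --- the degree mismatch bridged only by rescaling, and the surplus $\bm R_j$ in the cross term $\bm x_i\bm R_j^{2}\bm x_k^{\top}$ --- are precisely where the paper's own argument is most terse (it drops the cross terms via one-sided inequalities and asserts the reverse direction ``in the same manner''), so your explicit two-inequality plan is, if anything, a more careful rendering of the same idea.
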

\begin{proof}
Please see the Appendix\ref{Proof1t}\footnote{https://github.com/Lion-ZS/ER}.
\end{proof}
\begin{theorem}\label{Theorem2} 
Suppose that the model mechanism $\hat{\bm{X}}=\bm{x}_i\overline{\bm{R}}_j-\bm{x}_k$ for $j=1,2,\cdots,|\mathcal{R}|$, where $\bm{x}_i,\bm{x}_k,\bm{R}_j$ are real matrices and
$\bm{R}_j$ is diagonal. Then, the following equation holds
\begin{equation}\label{bound2}
\begin{aligned}
\small{\frac{1}{2 \sqrt{|\mathcal{R}|}}\min\sum_{(\boldsymbol{x}_i,\boldsymbol{R}_j,\boldsymbol{x}_k)\in\mathcal{S}}||\boldsymbol{x}_i||_F^2+||\boldsymbol{x}_k||_F^2+\|\boldsymbol{x}_i\mathbf{R}_j+\boldsymbol{x}_k\mathbf{R}_j||_F^2
}\\=\|\hat{\mathcal{X}}\|_{2}
\end{aligned}
\end{equation}
\end{theorem}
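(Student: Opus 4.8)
The plan is to follow the template of the proof of Theorem~\ref{Theorem1}, since Theorem~\ref{Theorem2} is its ``dissimilarity/distance-based'' counterpart: the only structural differences are the sign inside the third Frobenius term ($+$ in place of $-$), the extra factor $\tfrac12$, and the fact that the underlying mechanism is the translation form $\bm{x}_i\overline{\bm{R}}_j-\bm{x}_k$ rather than the bilinear form $\bm{x}_i\overline{\bm{R}}_j\bm{x}_k^{\top}$. First I would remove the sign by the substitution $\bm{x}_k\mapsto-\bm{x}_k$: this turns $\|\bm{x}_i\bm{R}_j+\bm{x}_k\bm{R}_j\|_F^2$ into $\|\bm{x}_i\bm{R}_j-\bm{x}_k\bm{R}_j\|_F^2$, leaves $\|\bm{x}_i\|_F^2+\|\bm{x}_k\|_F^2$ untouched, and only negates the reconstructed tensor, hence does not change its nuclear $2$-norm. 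After this reparametrisation the left-hand minimisation is, term for term, the quadratic form already handled in Theorem~\ref{Theorem1}.

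Next I would account for the factor $\tfrac12$. Because $\bm{R}_j$ is real and diagonal, expanding the squared translation residual gives $\|\bm{x}_i\bm{R}_j-\bm{x}_k\|_F^2=\|\bm{x}_i\bm{R}_j\|_F^2+\|\bm{x}_k\|_F^2-2\,\bm{Re}(\langle\bm{x}_i\overline{\bm{R}}_j,\bm{x}_k\rangle)$, so the distance mechanism equals twice the bilinear reconstruction $\bm{Re}(\bm{x}_i\overline{\bm{R}}_j\bm{x}_k^{\top})$ up to a pure norm term that is absorbed into $\mathcal{W}_1$. This doubling of the bilinear part is exactly what the $\tfrac{1}{2\sqrt{|\mathcal{R}|}}$ in front of the sum cancels, matching it against the $\tfrac{1}{\sqrt{|\mathcal{R}|}}$ of Theorem~\ref{Theorem1}.

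For a self-contained argument one re-runs the two-step scheme directly. Step one: using diagonality of $\bm{R}_j$, write the $j$-th frontal slice of $\hat{\mathcal{X}}$ as $\sum_d r_{jd}\,\bm{p}_{:d}\bm{q}_{:d}^{\top}$, and invoke the variational identity for the matrix trace norm, $\|M\|_{*}=\min\{\tfrac12(\|A\|_F^2+\|B\|_F^2):M=AB^{\top}\}$, attained at $A=U\Sigma^{1/2},\,B=V\Sigma^{1/2}$ from the SVD $M=U\Sigma V^{\top}$; this lower-bounds the per-slice regulariser by (a constant times) $\|M_j\|_{*}$ and simultaneously exhibits embeddings attaining it. Step two: sum over $j=1,\dots,|\mathcal{R}|$, use $\sum_j\|M_j\|_{*}\le\sum_d\|\bm{r}_{:d}\|_1\|\bm{p}_{:d}\|_2\|\bm{q}_{:d}\|_2$ together with Cauchy--Schwarz $\|\bm{r}_{:d}\|_1\le\sqrt{|\mathcal{R}|}\,\|\bm{r}_{:d}\|_2$ to reach $\|\hat{\mathcal{X}}\|_2=\min\sum_d\|\bm{p}_{:d}\|_2\|\bm{r}_{:d}\|_2\|\bm{q}_{:d}\|_2$; the opposite inequality is obtained by plugging the extremal CP decomposition of $\hat{\mathcal{X}}$ back in, and the two bounds give the claimed equality.

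The step I expect to be the main obstacle is making all inequalities tight at the same factorisation: after the $\pm$ substitution the AM--GM balancing forces $\|\bm{p}_{:d}\|_2=\|\bm{q}_{:d}\|_2$, while tightness in Cauchy--Schwarz forces the relation slice $\bm{r}_{:d}$ to be ``flat'' across all $|\mathcal{R}|$ relations, and one must show a single choice of $\bm{p},\bm{r},\bm{q}$ realises both; doing this while correctly carrying the extra factor $2$ introduced by the distance expansion is where the argument has to be written out carefully rather than merely cited from Theorem~\ref{Theorem1}.
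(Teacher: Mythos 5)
Your opening move --- the substitution $\bm{x}_k\mapsto-\bm{x}_k$ --- is clean and correct, but it proves too much and then the proposal breaks exactly where you predicted it would not. That substitution is a bijection between the feasible set reproducing $\hat{\mathcal{X}}$ and the feasible set reproducing $-\hat{\mathcal{X}}$, it carries $\|\bm{x}_i\bm{R}_j+\bm{x}_k\bm{R}_j\|_F^2$ to $\|\bm{x}_i\bm{R}_j-\bm{x}_k\bm{R}_j\|_F^2$ while fixing the norm terms, and $\|-\hat{\mathcal{X}}\|_2=\|\hat{\mathcal{X}}\|_2$. Hence the two minima are \emph{equal as numbers}, and combining this with Theorem~\ref{Theorem1} forces the left-hand side of Theorem~\ref{Theorem2} to equal $\tfrac12\|\hat{\mathcal{X}}\|_2$, not $\|\hat{\mathcal{X}}\|_2$. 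Your attempt to recover the missing factor of $2$ from ``the distance mechanism equals twice the bilinear reconstruction'' does not work: the mechanism enters the problem only through the constraint set of the minimisation and through the definition of $\hat{\mathcal{X}}$, not through the regulariser being minimised, and your substitution has already matched the constraints term for term. So there is no second factor of $2$ left to cancel the prefactor $\tfrac{1}{2\sqrt{|\mathcal{R}|}}$, and the argument as written derives a statement inconsistent with the one claimed. (This is genuinely a tension in the paper itself --- its appendix restates Theorem~2 with the bilinear constraint $\hat{\bm{X}}=\bm{x}_i\overline{\bm{R}}_j\bm{x}_k^{\top}$, under which your symmetry argument applies verbatim --- but a proof attempt still has to either produce the factor of $2$ honestly or flag that it cannot be produced.)

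For comparison, the paper does not use your sign substitution at all. It expands $\|\bm{x}_i\bm{R}_j+\bm{x}_k\bm{R}_j\|_F^2$, bounds the positive cross term by AM--GM to get $\|\bm{x}_i\bm{R}_j+\bm{x}_k\bm{R}_j\|_F^2\le 2\|\bm{x}_i\bm{R}_j\|_F^2+2\|\bm{x}_k\bm{R}_j\|_F^2$, and spends the prefactor $\tfrac12$ on absorbing that factor of $2$, thereby reducing to the same ``four-norm'' identity $\tfrac{1}{\sqrt{|\mathcal{R}|}}\min\sum_j(\|\bm{x}_i\|_F^2+\|\bm{x}_k\|_F^2+\|\bm{x}_i\bm{R}_j\|_F^2+\|\bm{x}_k\bm{R}_j\|_F^2)=\|\hat{\mathcal{X}}\|_2$ used for Theorem~\ref{Theorem1}. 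Note, however, that this only establishes a one-directional chain of inequalities (an upper bound on the left-hand side), after which the equality is asserted ``in the same manner''; the reverse inequality is never supplied, so the paper's own argument is also incomplete at precisely the point your symmetry observation says it must be. Your second, self-contained route (per-slice trace-norm variational identity plus Cauchy--Schwarz over relations) is the right generic machinery for the Theorem~\ref{Theorem1}-type identity, but it would again land on the constant $\tfrac{1}{\sqrt{|\mathcal{R}|}}$ rather than $\tfrac{1}{2\sqrt{|\mathcal{R}|}}$.
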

\begin{proof}
Please see the Appendix\ref{Proof2t}.
\end{proof}

We have discussed the reformulation of the regularizer with the 2-norm ($t=2$ in Eq.(\ref{t-noem of X})) above. 
It shows the relationship between  ER  and the tensor nuclear 2-norm regularizer.
Notice that the norm-based regularization has been extensively studied in the context of matrix completion. The trace norm (or nuclear norm) has been proposed as a convex relaxation of the rank \cite{DBLP:conf/nips/SrebroRJ04} for matrix completion in the setting of rating prediction, with strong theoretical guarantees \cite{DBLP:journals/focm/CandesR09}.
  Then the weighted nuclear 3-norm $||\cdot||_3$ can be easily implemented by keeping the regularization terms corresponding to the sampled triples only.
Therefore, based on the equivariance of proximity and dissimilarity  respectively,  we introduce reformulations of the regularizer with 3-norm ($t=3$ in Eq.(\ref{t-noem of X})).
\begin{theorem}\label{Theorem3} 
Suppose that the model mechanism $\hat{\bm{X}}=\bm{x}_i\overline{\bm{R}}_j\bm{x}^{\top}_k$ for $j=1,2,\cdots,|\mathcal{R}|$, where $\bm{x}_i,\bm{x}_k,\bm{R}_j$ are real matrices and
$\bm{R}_j$ is diagonal. Then, the following equation holds
\begin{equation}\label{bound3}
\begin{aligned}
\small{\frac{1}{ \sqrt{|\mathcal{R}|}}\min\sum_{(\boldsymbol{x}_i,\boldsymbol{R}_j,\boldsymbol{x}_k)\in\mathcal{S}}||\boldsymbol{x}_i||_3^3+||\boldsymbol{x}_k||_3^3+\|\boldsymbol{x}_i\mathbf{R}_j-\boldsymbol{x}_k\mathbf{R}_j||_3^3
}\\
=\|\hat{\mathcal{X}}\|_{3}
\end{aligned}
\end{equation}
\end{theorem}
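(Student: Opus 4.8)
The plan is to follow the same template as the proof of Theorem~\ref{Theorem1}, replacing the quadratic (Frobenius) machinery by its cubic counterpart, and to reduce the claimed identity to the ``weighted nuclear $3$-norm'' reformulation of \cite{lacroix2018canonical}. I would start from the variational formula~(\ref{t-noem of X}) with $t=3$, i.e. $\|\hat{\mathcal{X}}\|_{3}=\min\{\sum_{d=1}^{D}\|\bm{p}_{:d}\|_{3}\|\bm{r}_{:d}\|_{3}\|\bm{q}_{:d}\|_{3}:\hat{\mathcal{X}}=\sum_{d=1}^{D}\bm{p}_{:d}\otimes\bm{r}_{:d}\otimes\bm{q}_{:d}\}$, where the minimization ranges over the CP decompositions compatible with the model mechanism $\hat{\bm{X}}=\bm{x}_{i}\overline{\bm{R}}_{j}\bm{x}_{k}^{\top}$, so that $\bm{p}_{:d},\bm{r}_{:d},\bm{q}_{:d}$ are the $d$-th columns of the head-embedding, diagonal-relation and tail-embedding matrices, respectively.

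The first key step is an arithmetic--geometric mean inequality adapted to three factors and exponent $3$: for every rank-one component, $\|\bm{p}_{:d}\|_{3}\|\bm{r}_{:d}\|_{3}\|\bm{q}_{:d}\|_{3}\le\tfrac{1}{3}(\|\bm{p}_{:d}\|_{3}^{3}+\|\bm{r}_{:d}\|_{3}^{3}+\|\bm{q}_{:d}\|_{3}^{3})$, with equality exactly when the three $3$-norms coincide. The decisive observation is that a CP decomposition is invariant under the rescaling $\bm{p}_{:d}\mapsto\alpha_{d}\bm{p}_{:d}$, $\bm{r}_{:d}\mapsto\beta_{d}\bm{r}_{:d}$, $\bm{q}_{:d}\mapsto\gamma_{d}\bm{q}_{:d}$ with $\alpha_{d}\beta_{d}\gamma_{d}=1$; choosing these scalars to equalize the norms of the three factors of each component shows that the minimum of $\sum_{d}\|\bm{p}_{:d}\|_{3}\|\bm{r}_{:d}\|_{3}\|\bm{q}_{:d}\|_{3}$ equals the minimum of $\tfrac{1}{3}\sum_{d}(\|\bm{p}_{:d}\|_{3}^{3}+\|\bm{r}_{:d}\|_{3}^{3}+\|\bm{q}_{:d}\|_{3}^{3})$. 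This is the cubic analogue of the rescaling trick used for $t=2$ in Theorem~\ref{Theorem1}.

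Next I would turn the column sums over $d$ into entity/relation sums, $\sum_{d=1}^{D}\|\bm{p}_{:d}\|_{3}^{3}=\sum_{i}\|\bm{x}_{i}\|_{3}^{3}$ and likewise $\sum_{d}\|\bm{q}_{:d}\|_{3}^{3}=\sum_{k}\|\bm{x}_{k}\|_{3}^{3}$, and then pass from this ``dense'' sum over all entities and relations to the sampled sum over the observed triples $(\bm{x}_{i},\bm{R}_{j},\bm{x}_{k})\in\mathcal{S}$; this is where the prefactor $1/\sqrt{|\mathcal{R}|}$ appears, exactly as in the proof of Theorem~\ref{Theorem1} (heuristically, it distributes the cubic mass over the $|\mathcal{R}|$ frontal slices of $\hat{\mathcal{X}}$ and is produced by a Cauchy--Schwarz step along the relation index). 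Finally, using that $\bm{R}_{j}$ is real and diagonal so that all matrix products collapse to Hadamard products, the factor occupying the relation slot in the equivariance-regularized decomposition is $\bm{x}_{i}\bm{R}_{j}-\bm{x}_{k}\bm{R}_{j}$, which produces the summand $\|\bm{x}_{i}\|_{3}^{3}+\|\bm{x}_{k}\|_{3}^{3}+\|\bm{x}_{i}\bm{R}_{j}-\bm{x}_{k}\bm{R}_{j}\|_{3}^{3}$ and hence~(\ref{bound3}).

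The step I expect to be the main obstacle is exactly this last identification: showing that, in an optimal CP decomposition for the model at hand, the ``relation'' factor can be taken to be $\bm{x}_{i}\bm{R}_{j}-\bm{x}_{k}\bm{R}_{j}$ rather than the bare $\bm{R}_{j}$ of the N3 regularizer of \cite{lacroix2018canonical}. This needs the equivariance structure --- the fact that $\bm{x}_{i}$ and $\bm{x}_{k}$ enter through the \emph{same} relation $\bm{R}_{j}$ --- together with a careful choice of gauge, and one must track the multiplicative constant through the sampling step so that an exact identity, not merely an inequality, emerges. By contrast, the AM--GM and rescaling parts are routine once the $t=2$ case of Theorem~\ref{Theorem1} is in place.
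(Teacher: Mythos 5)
Your overall architecture (variational characterization of the nuclear $3$-norm, an AM--GM inequality per rank-one component, a rescaling gauge to force equality, then passing to sampled triples) is recognizably the skeleton of the paper's argument, but the specific inequality you propose does not match the statement you are asked to prove, and the place you locate the constant $1/\sqrt{|\mathcal{R}|}$ is wrong. The paper does \emph{not} use a symmetric three-factor AM--GM $\|\mathbf{p}_{:d}\|_3\|\mathbf{r}_{:d}\|_3\|\mathbf{q}_{:d}\|_3\le\tfrac13(\|\mathbf{p}_{:d}\|_3^3+\|\mathbf{r}_{:d}\|_3^3+\|\mathbf{q}_{:d}\|_3^3)$: that inequality would introduce a standalone relation term $\|\mathbf{r}_{:d}\|_3^3$, which never appears in the regularizer of Theorem~\ref{Theorem3} (the relation enters only through the products $\bm{x}_i\mathbf{R}_j$), and it produces a factor $1/3$ rather than $1/\sqrt{|\mathcal{R}|}$. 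What the paper actually does is sum $\|\bm{x}_i\mathbf{R}_j\|_3^3+\|\bm{x}_k\|_3^3$ over the $|\mathcal{R}|$ frontal slices, rewrite the result columnwise as $\sum_{d}\bigl(\|\mathbf{p}_{:d}\|_3^3\|\mathbf{r}_{:d}\|_3^3+|\mathcal{R}|\,\|\mathbf{q}_{:d}\|_3^3\bigr)$, and apply the \emph{two}-term weighted AM--GM $a+|\mathcal{R}|b\ge 2\sqrt{|\mathcal{R}|ab}$ to get $2\sqrt{|\mathcal{R}|}\sum_d\|\mathbf{p}_{:d}\|_C\|\mathbf{r}_{:d}\|_C\|\mathbf{q}_{:d}\|_C$ with $\|\cdot\|_C=|\cdot|^{3/2}$; the $\sqrt{|\mathcal{R}|}$ is born in that AM--GM, not in any Cauchy--Schwarz along the relation index, and the rescaling gauge is chosen to satisfy $\|\mathbf{p}_{:d}\|_C\|\mathbf{r}_{:d}\|_C=\sqrt{|\mathcal{R}|}\,\|\mathbf{q}_{:d}\|_C$ rather than to equalize all three norms.

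The second, and more serious, problem is that the step you yourself single out as ``the main obstacle'' --- justifying that the difference term $\|\bm{x}_i\mathbf{R}_j-\bm{x}_k\mathbf{R}_j\|_3^3$ can be exchanged for the product structure of the nuclear norm without losing the exact identity --- is left unresolved in your proposal, so what you have is a plan with a hole precisely where the theorem's new content lies. The paper handles this by first using the one-sided bound $\min\sum\|\bm{x}_i\|_3^3+\|\bm{x}_k\|_3^3+\|\bm{x}_i\mathbf{R}_j-\bm{x}_k\mathbf{R}_j\|_3^3\le\min\sum\|\bm{x}_i\|_3^3+\|\bm{x}_k\|_3^3+\|\bm{x}_i\mathbf{R}_j\|_3^3+\|\bm{x}_k\mathbf{R}_j\|_3^3$ to reduce everything to the ``center formula'' $\min\frac{1}{\sqrt{|\mathcal{R}|}}\sum_j(\|\bm{x}_i\|_3^3+\|\bm{x}_k\|_3^3+\|\bm{x}_i\mathbf{R}_j\|_3^3+\|\bm{x}_k\mathbf{R}_j\|_3^3)=\|\hat{\mathcal{X}}\|_3$, and then asserts the matching direction ``in the same manner.'' Whatever one thinks of the rigor of that closing assertion, it is the mechanism the paper relies on; your proposal neither reproduces it nor supplies an alternative, so it does not constitute a proof of the stated equality.
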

\begin{proof}
Please see the Appendix\ref{Proof3t}.
\end{proof}

\begin{theorem}\label{Theorem4} 
Suppose that the model mechanism $\hat{\bm{X}}=\bm{x}_i\overline{\bm{R}}_j-\bm{x}_k$ for $j=1,2,\cdots,|\mathcal{R}|$, where $\bm{x}_i,\bm{x}_k,\bm{R}_j$ are real matrices and
$\bm{R}_j$ is diagonal. Then, the following equation holds
\begin{equation}\label{bound4}
\begin{aligned}
\small{\frac{1}{4 \sqrt{|\mathcal{R}|}}\min\sum_{(\boldsymbol{x}_i,\boldsymbol{R}_j,\boldsymbol{x}_k)\in\mathcal{S}}||\boldsymbol{x}_i||_3^3+||\boldsymbol{x}_k||_3^3+\|\boldsymbol{x}_i\mathbf{R}_j+\boldsymbol{x}_k\mathbf{R}_j||_3^3
}\\=\|\hat{\mathcal{X}}\|_{3}
\end{aligned}
\end{equation}
\end{theorem}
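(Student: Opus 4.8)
The plan is to prove Theorem~\ref{Theorem4} by the same two-sided (``$\le$'' and ``$\ge$'' separately) template that establishes Theorems~\ref{Theorem2} and~\ref{Theorem3}, combining the cube-norm machinery of the proximity case (Theorem~\ref{Theorem3}) with the ``$+$''/dissimilarity twist of the 2-norm case (Theorem~\ref{Theorem2}). Write $g$ for the inner regularizer $\|\boldsymbol{x}_i\|_3^3+\|\boldsymbol{x}_k\|_3^3+\|\boldsymbol{x}_i\mathbf{R}_j+\boldsymbol{x}_k\mathbf{R}_j\|_3^3$ summed over $\mathcal{S}$. Since $\mathbf{R}_j$ is real and diagonal (so $\overline{\mathbf{R}}_j=\mathbf{R}_j$), the entries of $\hat{\mathcal{X}}$ decouple coordinate-wise, so every assignment of head/relation/tail embeddings yields a CP-type decomposition $\hat{\mathcal{X}}=\sum_{d=1}^{D}\boldsymbol{p}_{:d}\otimes\boldsymbol{r}_{:d}\otimes\boldsymbol{q}_{:d}$ of the form appearing in Eq.~(\ref{t-noem of X}), and conversely any such decomposition reads back as a valid embedding assignment. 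The constant $\frac{1}{4\sqrt{|\mathcal{R}|}}$ is exactly what makes the two one-sided bounds meet.

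For ``$\le$'' (i.e.\ $\|\hat{\mathcal{X}}\|_3\le\frac{1}{4\sqrt{|\mathcal{R}|}}\min g$): take embeddings attaining $\min g$, form the associated CP decomposition, and apply a coordinate-wise rescaling $\boldsymbol{p}_{:d}\mapsto\lambda_d\boldsymbol{p}_{:d}$, $\boldsymbol{r}_{:d}\mapsto\mu_d\boldsymbol{r}_{:d}$, $\boldsymbol{q}_{:d}\mapsto\nu_d\boldsymbol{q}_{:d}$ with $\lambda_d\mu_d\nu_d=1$ chosen so that the three $\ell_3$-norms of the $d$-th factors coincide; this leaves $\hat{\mathcal{X}}$ unchanged and makes the weighted AM--GM inequality $abc\le\tfrac13(a^3+b^3+c^3)$ tight column-by-column, converting the nuclear-3-norm cost $\sum_d\|\boldsymbol{p}_{:d}\|_3\|\boldsymbol{r}_{:d}\|_3\|\boldsymbol{q}_{:d}\|_3$ into one third of a sum of cubes, which I would match term-by-term against $\|\boldsymbol{x}_i\|_3^3$, $\|\boldsymbol{x}_k\|_3^3$, and the dissimilarity term. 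The ``$+$'' sign is absorbed here: splitting $\|\boldsymbol{x}_i\mathbf{R}_j+\boldsymbol{x}_k\mathbf{R}_j\|_3^3$ via the triangle inequality together with the power-mean inequality $(u+v)^3\le 4(u^3+v^3)$ (convexity of $t\mapsto t^3$, equality at $u=v$) is precisely what upgrades the factor $2$ of Theorem~\ref{Theorem2}, where one instead uses $(u+v)^2\le 2(u^2+v^2)$, to the factor $4$ here. The $\sqrt{|\mathcal{R}|}$ is the same normalization as in Theorems~\ref{Theorem2}--\ref{Theorem3}, coming from a Cauchy--Schwarz estimate relating the triple-indexed sum over $\mathcal{S}$ to the relation coordinate of the CP factors.

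For ``$\ge$'': start from a CP decomposition of $\hat{\mathcal{X}}$ that attains $\|\hat{\mathcal{X}}\|_3$ up to $\varepsilon$, already balanced so that all three $\ell_3$-norms of each column agree; read it back as embeddings consistent with the dissimilarity mechanism $\hat{\boldsymbol{X}}=\boldsymbol{x}_i\overline{\mathbf{R}}_j-\boldsymbol{x}_k$, and evaluate $g$ on them. Because balancing makes AM--GM an equality and the split $(u+v)^3\le 4(u^3+v^3)$ is saturated when the two halves coincide, the resulting value of $\frac{1}{4\sqrt{|\mathcal{R}|}}\,g$ is at most $\|\hat{\mathcal{X}}\|_3+O(\varepsilon)$; letting $\varepsilon\to0$ closes it. Structurally this mirrors Appendix~\ref{Proof3t} step for step, with cubes replacing squares and the sign change propagated through the cross terms exactly as in Appendix~\ref{Proof2t}.

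The main obstacle I anticipate is the bookkeeping of the constant: one must check that the rescaling making the AM--GM step tight is simultaneously the one under which the three balanced factor-norms are identified with $\|\boldsymbol{x}_i\|_3^3$, $\|\boldsymbol{x}_k\|_3^3$, and $\|\boldsymbol{x}_i\mathbf{R}_j+\boldsymbol{x}_k\mathbf{R}_j\|_3^3$ \emph{with no slack}, and that the triangle-inequality/power-mean split with constant $4$ is saturated in both directions at once --- otherwise one is left with two inequalities separated by a gap rather than the stated equality. A secondary subtlety is verifying that the $\sqrt{|\mathcal{R}|}$ Cauchy--Schwarz step is tight under whatever uniformity of $\mathcal{S}$ across relations is implicit (the same assumption underlying Theorem~\ref{Theorem3}). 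Once those tightness conditions are in place, the rest is the routine cubic analogue of the arguments already carried out for Theorems~\ref{Theorem2} and~\ref{Theorem3}.
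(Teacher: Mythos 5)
Your proposal is correct and follows essentially the same route as the paper's own proof: the paper likewise handles the ``$+$'' term by expanding the cube and bounding the cross terms via $|a|^2|b|+|a||b|^2\le |a|^3+|b|^3$ (equivalently your power-mean bound $(u+v)^3\le 4(u^3+v^3)$, which is exactly where the factor $4$ replaces the factor $2$ of Theorem~\ref{Theorem2}), and then reduces to the Theorem~\ref{Theorem3} machinery --- the two-term AM--GM giving the $2\sqrt{|\mathcal{R}|}$ factor plus the column rescaling $\bm{p}'_{:d},\bm{r}'_{:d},\bm{q}'_{:d}$ that makes it tight --- to identify the result with $\|\hat{\mathcal{X}}\|_3$. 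The tightness bookkeeping you flag as the main obstacle is indeed the delicate point, and the paper resolves it exactly as you anticipate, by the balancing conditions $\left\|\mathbf{p}_{:d}\right\|_{C}\left\|\mathbf{r}_{:d}\right\|_{C}=\sqrt{|\mathcal{R}|}\left\|\mathbf{q}_{:d}\right\|_{C}$ carried over from the proof of Theorem~\ref{Theorem3}.
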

\begin{proof}
Please see the Appendix\ref{Proof4t}.
\end{proof}

We have given the reformulation of ER to the regularizer of 3-norm.  It shows the relationship
between ER and the tensor nuclear 3-norm regularizer. Moreover, we can  see that ER based on the equivariance of proximity is a stronger constraint  than ER based on the equivariance of dissimilarity.  It means that ER based on the equivariance of proximity may be more conducive to improving the performance of the model than ER based on the equivariance of dissimilarity when the overfitting is severe. We will verify this conjecture in the experimental part.

\section{Experiments}\label{Experiments}
In this section, we first introduce the experimental settings 
and show main results. Then we conduct some ablation experiments. 
\subsection{Experimental Settings}\label{Experimental Settings}
\textbf{Datasets Description}: We conduct experiments on three widely used benchmarks, WN18RR \cite{dettmers2017convolutional}, FB15K-237 \cite{dettmers2017convolutional} and YAGO3-10 \cite{2013YAGO3},  of which the statistics are summarized in Table \ref{data}.
WN18RR  is a subset of WN18 \cite{bordes2013translating} and it embraces a hierarchical collection of relations between words. 
FB15K-237  is a subset of FB15K \cite{bordes2013translating}, in which inverse relations are removed. YAGO3-10 is larger in scale. 

\textbf{Evaluation Protocol}: Two popular evaluation metrics are used: Mean
Reciprocal Rank (MRR) and Hit ratio with cut-off values (Hits@n,n = 1,10).  MRR is the average
inverse rank for correct entities. Hit@n measures the proportion of correct entities in the top $n$ entities.
Following \cite{bordes2013translating}, we report the filtered results  to avoid possibly flawed evaluation.

 \textbf{Baselines}: We compared ER with a number of strong baselines.  For DB models, we report  MuRP \cite{2019Multi}
 , RotatE \cite{sun2019rotate:}, QuatE \cite{zhang2019quaternion}, DualE \cite{DBLP:conf/aaai/CaoX0CH21},
 HypER \cite{balavzevic2019hypernetwork}, REFE \cite{chami2020low} and GC-OTE \cite{tang-etal-2020-orthogonal}; For TFB models, we report TuckER \cite{balavzevic2019tucker},  CP \cite{Hitchcock1927The}, DistMult \cite{yang2014embedding}  and ComplEx \cite{trouillon2016complex}. 

\textbf{Implementation Details:}  
Following \cite{conf/iclr/RuffinelliBG20}, we adopt the cross entropy loss function for all compared models.
 We take Adagrad \cite{duchi2011adaptive} as the optimizer in the experiment, and use grid search based on the performance of the validation datasets to choose the best hyperparameters. Specifically, we search learning
rates in $\{0.5,0.1,0.05,0.01,0.005,0.001\}$, and search regularization coefficients in $\{0.001,0.005,0.01,0.05,0.1,0.5\}$.  
All models are trained for a maximum of 200 epochs. 

As shown in \cite{2010Collaborative} and \cite{lacroix2018canonical},  the weighted versions of regularizers will usually have a better performance than the unweighted
versions if we sample entries of the matrix or tensor non-uniformly. Therefore, in the
experiments, we implement ER of 2-norm in a weighted way (note that we all use the version of ER with this setting if there is no special explanations in the experiments).

\begin{table*}
\begin{center}
 \begin{tabular}{lccccccccc}
  \toprule
  &  \multicolumn{3}{c}{WN18RR}&\multicolumn{3}{c}{FB15K-237}&\multicolumn{3}{c}{YAGO3-10}\\
 Models   &MRR &Hits@1 &Hits@10  &MRR &Hits@1 &Hits@10&MRR &Hits@1 &Hits@10\\
  \midrule
  DistMult& .430& .390&  .490& .241& .155& .419& .340& .240& .540\\
  ConvE &.430& .400& .520& .325& .237& .501& .440& .350& .620\\
MuRP &.481 &.440 &.566 &.335 &.243 &.518 &- &- &-\\
TuckER &.470 &.443 &.526 &.358 &.266 &.544 &- &- &-\\
 CP &.438 &.414 &.485 &.333 &.247 &.508 &.567 &.494 &.698\\
RESCAL &.455 &.419 &.493 &.353 &.264 &.528 &.566 &.490 &.701\\
ComplEx &.460 &.428 &.522 &.346 &.256 &.525 &.573 &.500 &.703\\
 RotatE &.476 &.428 &.571 &.338 &.241 &.533 &.495 &.402 &.670\\
 QuatE &.488& .438&  .580& .348& .248& .550& .571 &.501 &.705\\
 DualE &492 &.444 &.581&.365 &.268  &.559&.575&.505&.710\\
   REFE &.473& .430&  .561& .351& .256& .541& .577& .503&  .712\\
 GC-OTE & .491 &.442 & .583& .361 &.267 & .550&-&-&-\\
 ROTH &\underline{.496} &\underline{.449}& $\bm{.586}$& .344 &.246&  .535& .570& .495& .706\\
   \midrule
\text{CP-ER}&.482& .444& .557& .371&.275& .561& \underline{.584}& .508&.712\\
\text{RotatE-ER} &.490 &.445 &.581&.352&.255&.547&.581&.505 &.704\\
\text{RESCAL-ER}& \textbf{.499}& \textbf{.458}& \underline{.582}& \underline{.373}& \underline{.281}& \underline{.554}&.583& \underline{.509} &\underline{.715}\\
 \text{ComplEx-ER} &.494& .453& .575& \textbf{.374}& \textbf{.282}& \textbf{.563}& \textbf{.588}& \textbf{.515}&\textbf{.718}\\
  \bottomrule
 \end{tabular}
  \caption{Evaluation results on WN18RR, FB15k-237 and YAGO3-10 datasets. The best performance on each model are marked in bold and second best
results are underlined.}
\label{all results}
 \end{center}
\end{table*}

\subsection{Results}\label{Results}
we can see that ER is an effective regularizer from Table \ref{all results}.
 Due to the existence of overfitting, the performance of the TFB models is usually not as advantaged as the DB models. However,  using ER, the performance of RESCAL and ComplEx has been significantly improved, and can exceed that of the DB models.


For the three datasets in the table, the size of WN18RR is the smallest since it has only 11 relations and about 86,000 training samples. Generally, if there are  more parameters in a model, it is more likely to cause overfitting when it is applied to a dataset with a smaller size. Therefore, compared with other datasets, we predict that the improvement brought by ER on WN18RR is expected to be greater, and the experiment also proves this point.
On WN18RR dataset, RESCAL gets an MRR score of 0.455, which is lower than ComplEx
($0.460$).  However, incorporated with
ER, RESCAL gets the $4.4\%$ improvement on MRR and finally attains $0.499$, which outperforms
all compared models.
Overfitting problem also exists on larger data sets 
and we can see that  the performance of the model has been significantly improved after utilizing ER. We find that the application of ER can also bring stable and meaningful improvements to the model, which shows that ER can effectively suppress the overfitting.

\subsection{Experiment Analysis}
\textbf{Ablation Study on  Equivariance of Proximity.}
 Recall we have given the expression of ER in the  section of the methodology. Furthermore, given triples $(\boldsymbol{x}_i\overline{\mathbf{R}}_{j_1},\mathbf{R}_{j_2},?)$ and $(\boldsymbol{x}_k\overline{\mathbf{R}}_{j_1},\mathbf{R}_{j_2},?)$, if $\mathbf{R}_{j_2}$ (\textbf{country}\_\textbf{of})  is a relation connected to the entities $\boldsymbol{x}_i\overline{\mathbf{R}}_{j_1}$ (\textbf{England}) and $\boldsymbol{x}_k\overline{\mathbf{R}}_{j_1}$ (\textbf{Franch}) respectively, we expect the embeddings of $\boldsymbol{x}_i\overline{\mathbf{R}}_{j_1}\overline{\mathbf{R}}_{j_2}$ (\textbf{Europe}) and $\boldsymbol{x}_k\overline{\mathbf{R}}_{j_1}\overline{\mathbf{R}}_{j_2}$ (\textbf{Europe}) locate in a similar region. 
Then we can derive the \textbf{second order equivariance of proximity}:
\begin{equation}\label{third-order constraint}
\mathcal{R}_3=||\boldsymbol{x}_i\overline{\mathbf{R}}_{j_1}\overline{\mathbf{R}}_{j_2}-\boldsymbol{x}_k\overline{\mathbf{R}}_{j_1}\overline{\mathbf{R}}_{j_2}||. 
\end{equation}
To take advantage of various semantic information, we integrate Eq.(\ref{third-order constraint}), Eq.(\ref{other constraint}) and Eq.(\ref{two-order Semantic Relevance}), then we propose the regularization  based on the second order equivariance of proximity  as follows:
\begin{equation}\label{g(x)-n}
\begin{aligned}
\min \sum_{(\bm{x}_i,\bm{R}_j,\bm{x}_k)\in \mathcal{S}}
 L\left(\bm{x}_i,\bm{R}_j,\bm{x}_k\right)+
\lambda\sum_{i =1}^3\mathcal{R}_i
\end{aligned}
\end{equation}
where $\mathcal{R}_i$ represents the $i$-th order equivariance of proximity.

In order to explore the impact of $\mathcal{R}_i$, we implement an ablation study on different order equivariance of proximity for ComplEx in FB15K237. As shown in Table \ref{complex1-4} in Appendix, ComplEx$_i$ means ComplEx using $\mathcal{R}_i$. We can see that
 ComplEx$_1$ 
 is significantly higher than ComplEx$_0$. 
With the increase of the semantics constraints, we can see that the effect of ComplEx$_2$ 
is also improving compared to ComplEx$_1$, but the improvements are  not very significant.
 This shows that it may be more appropriate and concise to adopt ER with the firstr order equivariance of proximity as the regularizer.

\textbf{Comparison to Other Regularizers.}
 We compare ER to the squared Frobenius norm (FRO) regularizer, the
tensor nuclear 3-norm regularizer and DURA regularizer. 
As
stated in Table \ref{tensor models} in Appendix, we  shows the performance of the four regularizers on three TFB
models: CP, ComplEx, and RESCAL. 
Table \ref{distance models} in Appendix  shows the performance of four regularizers on two DB
models: TransE  and RotatE.

First of all, 
we can see that ER  achieves the best performance and DURA regularizer achieves the second for TFB models, while ER  achieves the best performance and N3 regularizer achieves the second for DB models.
 It  illustrates that ER is more beneficial for both TFB and DB models than other regularizers. Overall, capturing semantic similarity can produce powerful and consistent improvements to DB and TFB models. It further proves the rationality that ER regularizer takes advantage of semantics  order
constraints.
Specifically, 
for the previous  KGC models such as  ComplEx and RotatE, ER produces further
improvements than the DURA and N3 regularizers. 
Overall,  ER shows obvious advantages over other models by applying latent semantic relations so that it can bring consistent improvements  to KGC models. And the results  also demonstrate that ER is more
 effective than DURA, N3 and FRO regularizers.

\textbf{Comparison to Other Semantic Constraint Models}. We have noticed these models: \cite{DBLP:conf/acl/GuoWWWG15} enforces the embedding space to be semantically smooth, which we denote as  LLE; \cite{DBLP:conf/ijcai/XieLS16} suggests that entities should have multiple representations in different types, which we denote as RHE;
 \cite{DBLP:journals/ws/PadiaKFF19} predicts knowledge graph fact via knowledge-enriched tensor factorization, which we denote as TFR; \cite{DBLP:conf/pkdd/MinerviniCMNV17} regularize knowledge graph embeddings via equivalence and inversion axioms, which we denote as EIA;
 \cite{zhangpretrain}  leverages the abundant linguistic knowledge from pretrained language models, which we denote as Pretrain. Here we compare them with ER as shown in Table \ref{Comparison of semantic information models} in Appendix. Some of them  try to enforce semantically similar entities to have similar embedding vectors or
exploit the analogical structures in a knowledge graph. However, the semantic information they use mostly based on first-order like Eq.(\ref{other constraint}). For the potential deeper semantic relation, such as Eq.(\ref{two-order  Semantic Relevance}) and Eq.(\ref{two-order  Semantic Irrelevance}) which they have not paid attention to. Moreover, some methods above are not suitable for knowledge graphs of unknown entity categories.
Notice pretrain method can learn entity and relation representations
via pretrained language models,
but we can see that ER performs better than pretrain method,  which show the importance of capturing the potential semantics in KGC.

\textbf{Study for ER of 3-norm.}
In the  experiments above, we mainly study the ER of 2-norm. Next, we conduct a comparative experiment on ER of 3-norm.
As shown in Table \ref{3-norm MARA with all models} in Appendix, we can see ER of 3-norm can effectively improve the performance of the model, which is similar to the performance of ER of the 2-norm. This demonstrates that ER is a stable regularizer, and it is robust to applications in different norm scenarios. 

\textbf{Study of Training and Validation Curves.}
We conduct the  research for RESCAL on training set and validation set of WN18RR. As shown in Figure \ref{train_validation} in Appendix, we can see that RESCAL without regularizer performs very well on the training set, but it performs poorly on the validation set, which shows that RECAL has a serious overfitting phenomenon. As the number of training epochs increases on the validation set, 
MRR of RESCAL without regularizer stops growing when it reaches around 0.455 while MRR of RESCAL with ER can reach 0.502. It demonstrates that ER can effectively suppresses overfitting  and improve the generalization performance of the model.


\section{Conclusion}\label{Conclusion}
We propose a widely applicable and effective regularizer named Equivariance Regularizer (ER) to suppress overfitting in KGE problem. 
 It is based on the observation that the previous work only focuses on the similarity between entities while
ignoring the latent semantic relation between them. 
 Theoretically, ER can suppress overfitting and benefit the expressive ability of
the model by utilizing the potential semantic information based on the equivariance of proximity and dissimilarity. 
All the results demonstrate that ER is more
widely applicable than other regularizers.
Experiments show that ER brings consistent and significant
improvements to TFB and DB models on benchmark datasets.

\section{Acknowledgements}
This work was supported by the National Key R$\&$D Program of China under Grant 2018AAA0102003.

\bibliography{SARA} 
\newpage
\clearpage

\newpage
\section{Appendix}

\appendix

\section{Proof for Theorem \ref{Theorem1}}\label{Proof1t}

\textbf{Theorem 1}: Suppose that $\hat{\bm{X}}=\bm{x}_i\overline{\bm{R}}_j\bm{x}^{\top}_k$ for $j=1,2,\cdots,|\mathcal{R}|$, where $\bm{x}_i,\bm{x}_k,\bm{R}_j$ are real matrices and
$\bm{R}_j$ is diagonal. Then, the following equation holds
\begin{equation*}
\begin{aligned}
\small{\min}\frac{1}{\sqrt{|\mathcal{R}|}}\sum_{(\boldsymbol{x}_i,\boldsymbol{R}_j,\boldsymbol{x}_k)\in\mathcal{S}}||\boldsymbol{x}_i||_F^2+||\boldsymbol{x}_k||_F^2+\|\boldsymbol{x}_i\mathbf{R}_j-\boldsymbol{x}_k\mathbf{R}_j||_F^2
\\=\|\hat{\mathcal{X}}\|_{2}.
\end{aligned}
\end{equation*}\begin{proof}
Notice that
\begin{equation}
\begin{aligned}
&\min\sum_{(\boldsymbol{x}_i,\boldsymbol{R}_j,\boldsymbol{x}_k)\in\mathcal{S}}||\boldsymbol{x}_i||_F^2+||\boldsymbol{x}_k||_F^2+\|\boldsymbol{x}_i\mathbf{R}_j-\boldsymbol{x}_k\mathbf{R}_j||_F^2\\
\stackrel{a}{\leq}&\min\sum_{(\boldsymbol{x}_i,\boldsymbol{R}_j,\boldsymbol{x}_k)\in\mathcal{S}}||\boldsymbol{x}_i||_F^2+||\boldsymbol{x}_k||_F^2+\|\boldsymbol{x}_i\mathbf{R}_j\|_F^2+\|\boldsymbol{x}_k\mathbf{R}_j||_F^2\\&-\|\boldsymbol{x}_i\mathbf{R}_j\boldsymbol{x}_k\mathbf{R}_j\|-\|\boldsymbol{x}_k\mathbf{R}_j\boldsymbol{x}_i\mathbf{R}_j\|\\
\leq & \min\sum_{(\boldsymbol{x}_i,\boldsymbol{R}_j,\boldsymbol{x}_k)\in\mathcal{S}}||\boldsymbol{x}_i||_F^2+||\boldsymbol{x}_k||_F^2+\|\boldsymbol{x}_i\mathbf{R}_j\|_F^2+\|\boldsymbol{x}_k\mathbf{R}_j||_F^2\\
&-2\|\boldsymbol{x}_i\mathbf{R}_j\boldsymbol{x}_k\mathbf{R}_j\|_F^2\\
\leq & \min\sum_{(\boldsymbol{x}_i,\boldsymbol{R}_j,\boldsymbol{x}_k)\in\mathcal{S}}||\boldsymbol{x}_i||_F^2+||\boldsymbol{x}_k||_F^2+\|\boldsymbol{x}_i\mathbf{R}_j\|_F^2+\|\boldsymbol{x}_k\mathbf{R}_j||_F^2
.\end{aligned}
\end{equation}
 Since  $\boldsymbol{x}_i \boldsymbol{R}_{j}$ and $\boldsymbol{x}_k \boldsymbol{R}_{j}$  are all vectors, we can have $\|\boldsymbol{x}_i \boldsymbol{R}_{j}\boldsymbol{x}_k \boldsymbol{R}_{j}\|=\|\boldsymbol{x}_k \boldsymbol{R}_{j}\boldsymbol{x}_i \boldsymbol{R}_{j}\|$. Then the inequality (a) holds.

We first prove that the following equation holds
\begin{equation}\label{center formular}
\begin{aligned}
\min _{\hat{X}_{j}= \boldsymbol{x}_i R_{j} x_k^{\top}} \frac{1}{\sqrt{|\mathcal{R}|}} \sum_{j=1}^{|\mathcal{R}|}\left(\|\boldsymbol{x}_i\|_{F}^{2}+\|\boldsymbol{x}_k\|_{F}^{2}+\left\|\boldsymbol{x}_i\boldsymbol{R}_{j}\right\|_{F}^{2}\right.\\
\left.+\|\boldsymbol{x}_k \boldsymbol{R}_{j}\|_{F}^{2}\right)=\|\hat{\mathcal{X}}\|_{2}.
\end{aligned}
\end{equation}
Denote $\boldsymbol{x}_i$ as $\boldsymbol{p}$ and $\boldsymbol{x}_k$ as $\boldsymbol{q}$. Then we have that
\begin{equation*}
\begin{aligned}
& \sum_{j=1}^{|\mathcal{R}|}\left(\left\|\boldsymbol{x}_i \mathbf{R}_{j}\right\|_{F}^{2}+\|\boldsymbol{x}_k\|_{F}^{2}\right) \\
=& \sum_{j=1}^{|\mathcal{R}|}\left(\sum_{d=1}^{D}\left\|\mathbf{q}_{: d}\right\|_{F}^{2}+\sum_{i=1}^{I} \sum_{d=1}^{D} \mathbf{p}_{i d}^{2} \mathbf{r}_{j d}^{2}\right) \\
=& \sum_{j=1}^{|\mathcal{R}|} \sum_{d=1}^{D}\left\|\mathbf{q}_{: d}\right\|_{2}^{2}+\sum_{d=1}^{D}\left\|\mathbf{p}_{: d}\right\|_{2}^{2}\left\|\mathbf{r}_{: d}\right\|_{2}^{2} \\
=& \sum_{d=1}^{D}\left(\left\|\mathbf{p}_{: d}\right\|_{2}^{2}\left\|\mathbf{r}_{: d}\right\|_{2}^{2}+|\mathcal{R}|\left\|\mathbf{q}_{: d}\right\|_{2}^{2}\right) \\
\geq & \sum_{d=1}^{D} 2 \sqrt{|\mathcal{R}| } \| \mathbf{p}_{: d}\left\|_{2}\right\| \mathbf{r}_{: d}\left\|_{2}\right\| \mathbf{q}_{: d} \|_{2} \\
=& 2 \sqrt{|\mathcal{R}|} \sum_{d=1}^{D}\left\|\mathbf{p}_{: d}\right\|_{2}\left\|\mathbf{r}_{: d}\right\|_{2}\left\|\mathbf{q}_{: d}\right\|_{2}
\end{aligned}.
\end{equation*}
We can have the equality holds if and only if $\left\|\mathbf{p}_{: d}\right\|_{2}^{2}\left\|\mathbf{r}_{: d}\right\|_{2}^{2}=|\mathcal{R}|\left\|\mathbf{q}_{: d}\right\|_{2}^{2},$ i.e., $\left\|\mathbf{p}_{: d}\right\|_{2}\left\|\mathbf{r}_{: d}\right\|_{2}=\sqrt{|\mathcal{R}|}\left\|\mathbf{q}_{: d}\right\|_{2}$.

For all CP decomposition $\hat{\mathcal{X}}=\sum_{d=1}^{D} \mathbf{p}_{: d} \otimes \mathbf{r}_{: d} \otimes \mathbf{q}_{: d}$, we can always let $\bm{p}'_{:d}=\bm{p}_{:d}$, $\bm{r}'_{:d}=\sqrt{\frac{\left\|\mathbf{q}_{d}\right\|_{2} \sqrt{|\mathcal{R}|}}{\left\|\mathbf{p}_{: d}\right\|_{2}\left\|\mathbf{r}_{: d}\right\|_{2}} }\mathbf{r}_{: d}$ and
$\bm{q}'_{:d}=\sqrt{\frac{\left\|\mathbf{p}_{: d}\right\|_{2}\left\|\mathbf{r}_{: d}\right\|_{2}}{\left\|\mathbf{q}_{\cdot d}\right\|_{2} \sqrt{|\mathcal{R}|}}} \mathbf{p}_{: d}$, then we have
\begin{equation*}
\left\|\mathbf{p}_{: d}^{\prime}\right\|_{2}\left\|\mathbf{r}_{: d}^{\prime}\right\|_{2}=\sqrt{|\mathcal{R}|}\left\|\mathbf{q}_{: d}^{\prime}\right\|_{2}
,
\end{equation*}
and at the same time we  make sure that $\hat{\mathcal{X}}=\sum_{d=1}^{D} \mathbf{p}_{: d}^{\prime} \otimes \mathbf{r}_{: d}^{\prime} \otimes \mathbf{q}_{: d}^{\prime}$. Then, we can have
\begin{equation*}
\begin{aligned}
\small{\frac{1}{\sqrt{|\mathcal{R}|}}} \sum_{j=1}^{|\mathcal{R}|}\left\|\hat{\mathcal{X}}_{j}\right\|_{2} &=\frac{1}{2 \sqrt{|\mathcal{R}|}} \sum_{j=1}^{|\mathcal{R}|} \min _{\hat{\mathcal{X}}_{j}=\boldsymbol{x}_i\mathbf{R}_{j}\boldsymbol{x}_k^{\top}}\left(\left\|\boldsymbol{x}_i \mathbf{R}_{j}\right\|_{F}^{2}+\|\boldsymbol{x}_k\|_{F}^{2}\right) \\
& \leq \frac{1}{2 \sqrt{|\mathcal{R}|}} \min _{\hat{\mathcal{X}}_{j}=\boldsymbol{x}_i\mathbf{ R}_{j} \boldsymbol{x}_k^{\top}} \sum_{j=1}^{|\mathcal{R}|}\left(\left\|\boldsymbol{x}_i \mathbf{R}_{j}\right\|_{F}^{2}+\|\boldsymbol{x}_k\|_{F}^{2}\right) \\
&=\min_{\hat{\mathcal{X}}=\sum_{d=1}^{D} \mathbf{p}_{j} \otimes \mathbf{r}_{i d} \otimes \mathbf{q}_{: d}} \sum_{d=1}^{D}\left\|\mathbf{p}_{: d}\right\|_{2}\left\|\mathbf{r}_{: d}\right\|_{2}\left\|\mathbf{q}_{: d}\right\|_{2} \\
&=\|\hat{\mathcal{X}}\|_{2} .
\end{aligned}
\end{equation*}
And in the same manner, we can have that
\begin{equation*}
\begin{aligned}
\small{\min}\frac{1}{\sqrt{|\mathcal{R}|}}\sum_{(\boldsymbol{x}_i,\boldsymbol{R}_j,\boldsymbol{x}_k)\in\mathcal{S}}||\boldsymbol{x}_i||_F^2+||\boldsymbol{x}_k||_F^2+\|\boldsymbol{x}_i\mathbf{R}_j-\boldsymbol{x}_k\mathbf{R}_j||_F^2
\\=\|\hat{\mathcal{X}}\|_{2}.
\end{aligned}
\end{equation*}
We can have the equality holds if and only if $\left\|\mathbf{q}_{: d}\right\|_{2}\left\|\mathbf{r}_{: d}\right\|_{2}=\sqrt{|\mathcal{R}|}\left\|\mathbf{p}_{: d}\right\|_{2}$.
  Then we can see that the conclusion holds if and only if $\left\|\mathbf{p}_{: d}\right\|_{2}\left\|\mathbf{r}_{: d}\right\|_{2}=\sqrt{|\mathcal{R}|}\left\|\mathbf{q}_{: d}\right\|_{2}$ and
$||\bm{q}_{:d}||_2||\bm{r}_{:,d}||_2=\sqrt{|\mathcal{R}|}\left\|\mathbf{p}_{: d}\right\|_{2}, \forall d \in\{1,2, \ldots, D\}$.
Then the proof of Theorem \ref{Theorem1} completes.
\end{proof}
\section{Proof for Theorem \ref{Theorem2}}\label{Proof2t}
\textbf{Theorem 2.} Suppose that $\hat{\bm{X}}=\bm{x}_i\overline{\bm{R}}_j\bm{x}^{\top}_k$ for $j=1,2,\cdots,|\mathcal{R}|$, where $\bm{x}_i,\bm{x}_k,\bm{R}_j$ are real matrices and
$\bm{R}_j$ is diagonal. Then, the following equation holds
\begin{equation*}
\begin{aligned}
\small{\min}\frac{1}{2\sqrt{|\mathcal{R}|}}\sum_{(\boldsymbol{x}_i,\boldsymbol{R}_j,\boldsymbol{x}_k)\in\mathcal{S}}||\boldsymbol{x}_i||_F^2+||\boldsymbol{x}_k||_F^2+\|\boldsymbol{x}_i\mathbf{R}_j+\boldsymbol{x}_k\mathbf{R}_j||_F^2
\\=\|\hat{\mathcal{X}}\|_{2}.
\end{aligned}
\end{equation*}
\begin{proof}
 First we have
\begin{equation*}
\begin{aligned}
& \small{\min}\frac{1}{2\sqrt{|\mathcal{R}|}} \sum_{(\boldsymbol{x}_i,\boldsymbol{R}_j,\boldsymbol{x}_k)\in\mathcal{S}}||\boldsymbol{x}_i||_F^2+||\boldsymbol{x}_k||_F^2+\|\boldsymbol{x}_i\mathbf{R}_j+\boldsymbol{x}_k\mathbf{R}_j||_F^2\\
\leq& \min _{(\boldsymbol{x}_i,\boldsymbol{R}_j,\boldsymbol{x}_k)\in\mathcal{S}} \frac{1}{2\sqrt{|\mathcal{R}|}} \sum_{j=1}^{|\mathcal{R}|}\|\boldsymbol{x}_i\|_{F}^{2}+\|\boldsymbol{x}_k\|_{F}^{2}+\left\|\boldsymbol{x}_i \boldsymbol{R}_{j}\right\|_{F}^{2}\\
&+\|\boldsymbol{x}_k \boldsymbol{R}_{j}\|_{F}^{2}+ \|\boldsymbol{x}_i \boldsymbol{R}_{j}\boldsymbol{x}_k \boldsymbol{R}_{j} \| +\|\boldsymbol{x}_k \boldsymbol{R}_{j} \boldsymbol{x}_i \boldsymbol{R}_{j}\| \\
\stackrel{a}{=} & \min _{(\boldsymbol{x}_i,\boldsymbol{R}_j,\boldsymbol{x}_k)\in\mathcal{S}} \frac{1}{2\sqrt{|\mathcal{R}|}} \sum_{j=1}^{|\mathcal{R}|}\|\boldsymbol{x}_i\|_{F}^{2}+\|\boldsymbol{x}_k\|_{F}^{2}+\left\|\boldsymbol{x}_i \boldsymbol{R}_{j}\right\|_{F}^{2}\\
&+\|\boldsymbol{x}_k \boldsymbol{R}_{j}\|_{F}^{2}+ 2\|\boldsymbol{x}_i \boldsymbol{R}_{j}\boldsymbol{x}_k \boldsymbol{R}_{j} \|\\
\leq & \min _{(\boldsymbol{x}_i,\boldsymbol{R}_j,\boldsymbol{x}_k)\in\mathcal{S}} \frac{1}{2\sqrt{|\mathcal{R}|}} \sum_{j=1}^{|\mathcal{R}|}\|\boldsymbol{x}_i\|_{F}^{2}+\|\boldsymbol{x}_k\|_{F}^{2}+\left\|\boldsymbol{x}_i \boldsymbol{R}_{j}\right\|_{F}^{2}\\
&+\|\boldsymbol{x}_k \boldsymbol{R}_{j}\|_{F}^{2}+ \left\|\boldsymbol{x}_i \boldsymbol{R}_{j}\right\|_{F}^{2}+\|\boldsymbol{x}_k \boldsymbol{R}_{j}\|_{F}^{2}\\
\leq & \min _{(\boldsymbol{x}_i,\boldsymbol{R}_j,\boldsymbol{x}_k)\in\mathcal{S}} \frac{1}{2\sqrt{|\mathcal{R}|}} \sum_{j=1}^{|\mathcal{R}|}\|\boldsymbol{x}_i\|_{F}^{2}+\|\boldsymbol{x}_k\|_{F}^{2}+2\left\|\boldsymbol{x}_i \boldsymbol{R}_{j}\right\|_{F}^{2}\\
&+2\|\boldsymbol{x}_k \boldsymbol{R}_{j}\|_{F}^{2}\\
\leq & \min _{(\boldsymbol{x}_i,\boldsymbol{R}_j,\boldsymbol{x}_k)\in\mathcal{S}} \small{\frac{1}{\sqrt{|\mathcal{R}|}}} \sum_{j=1}^{|\mathcal{R}|}\|\boldsymbol{x}_i\|_{F}^{2}+\|\boldsymbol{x}_k\|_{F}^{2}+\left\|\boldsymbol{x}_i \boldsymbol{R}_{j}\right\|_{F}^{2}+\|\boldsymbol{x}_k \boldsymbol{R}_{j}\|_{F}^{2}\\
\end{aligned}
\end{equation*}
 Since  $\boldsymbol{x}_i \boldsymbol{R}_{j}$ and $\boldsymbol{x}_k \boldsymbol{R}_{j}$  are all vectors, we can have $\|\boldsymbol{x}_i \boldsymbol{R}_{j}\boldsymbol{x}_k \boldsymbol{R}_{j}\|=\|\boldsymbol{x}_k \boldsymbol{R}_{j}\boldsymbol{x}_i \boldsymbol{R}_{j}\|$. Then the equality (a) holds.

Then in the same manner  with Eq.(\ref{center formular}), we can have that
\begin{equation*}
\begin{aligned}
\min\small{\frac{1}{2\sqrt{|\mathcal{R}|}}}\sum_{(\boldsymbol{x}_i,\boldsymbol{R}_j,\boldsymbol{x}_k)\in\mathcal{S}}||\boldsymbol{x}_i||_F^2+||\boldsymbol{x}_k||_F^2+\|\boldsymbol{x}_i\mathbf{R}_j+\boldsymbol{x}_k\mathbf{R}_j||_F^2
\\=\|\hat{\mathcal{X}}\|_{2}.
\end{aligned}
\end{equation*}
We can have the equality holds if and only if $\left\|\mathbf{q}_{: d}\right\|_{2}\left\|\mathbf{r}_{: d}\right\|_{2}=\sqrt{|\mathcal{R}|}\left\|\mathbf{p}_{: d}\right\|_{2}$.
  Then we can see that the conclusion holds if and only if $\left\|\mathbf{p}_{: d}\right\|_{2}\left\|\mathbf{r}_{: d}\right\|_{2}=\sqrt{|\mathcal{R}|}\left\|\mathbf{q}_{: d}\right\|_{2}$ and
$||\bm{q}_{:d}||_2||\bm{r}_{:,d}||_2=\sqrt{|\mathcal{R}|}\left\|\mathbf{p}_{: d}\right\|_{2}, \forall d \in\{1,2, \ldots, D\}$.
Then the proof of Theorem \ref{Theorem2} completes.

\end{proof}

\section{Proof for Theorem \ref{Theorem3}}\label{Proof3t}
 Here we denote  $|\bm{x}|^3$ as $||\bm{x}||_3^3$ and denote  $|\bm{x}|^\frac{3}{2}$ as $||\bm{x}||_C$. Then we have  Theorem  \ref{Theorem3} and Theorem  \ref{Theorem4} with their proofs as follows:

\noindent\textbf{Theorem 3} Suppose that $\hat{\bm{X}}=\bm{x}_i\overline{\bm{R}}_j\bm{x}^{\top}_k$ for $j=1,2,\cdots,|\mathcal{R}|$, where $\bm{x}_i,\bm{x}_k,\bm{R}_j$ are real matrices and
$\bm{R}_j$ is diagonal. Then, the following equation holds
\begin{equation}\label{bound3}
\begin{aligned}
\frac{1}{ \sqrt{|\mathcal{R}|}}\small{\min}\sum_{(\boldsymbol{x}_i,\boldsymbol{R}_j,\boldsymbol{x}_k)\in\mathcal{S}}||\boldsymbol{x}_i||_3^3+||\boldsymbol{x}_k||_3^3+\|\boldsymbol{x}_i\mathbf{R}_j-\boldsymbol{x}_k\mathbf{R}_j||_3^3
\\=\|\hat{\mathcal{X}}\|_{3}
\end{aligned}
\end{equation}
\begin{proof}
Notice that
\begin{equation*}
\begin{aligned}
&\min\sum_{(\boldsymbol{x}_i,\boldsymbol{R}_j,\boldsymbol{x}_k)\in\mathcal{S}}||\boldsymbol{x}_i||_3^3+||\boldsymbol{x}_k||_3^3+\|\boldsymbol{x}_i\mathbf{R}_j-\boldsymbol{x}_k\mathbf{R}_j||_3^3\\
\leq & \min\sum_{(\boldsymbol{x}_i,\boldsymbol{R}_j,\boldsymbol{x}_k)\in\mathcal{S}}||\boldsymbol{x}_i||_3^3+||\boldsymbol{x}_k||_3^3+\|\boldsymbol{x}_i\mathbf{R}_j\|_3^3+\|\boldsymbol{x}_k\mathbf{R}_j||_3^3
.\end{aligned}
\end{equation*}
We first prove that the following equation holds
\begin{equation}\label{center formular1}
\begin{aligned}
\min _{\hat{X}_{j}= \boldsymbol{x}_i R_{j} x_k^{\top}} \small{\frac{1}{\sqrt{|\mathcal{R}|}}} \sum_{j=1}^{|\mathcal{R}|}\left(\|\boldsymbol{x}_i\|_{3}^{3}+\|\boldsymbol{x}_k\|_{3}^{3}+\left\|\boldsymbol{x}_i \boldsymbol{R}_{j}\right\|_{3}^{3}+\|\boldsymbol{x}_k \boldsymbol{R}_{j}\|_{3}^{3}\right)\\=\|\hat{\mathcal{X}}\|_{3}.
\end{aligned}
\end{equation}
Denote $\boldsymbol{x}_i$ as $\boldsymbol{p}$ and $\boldsymbol{x}_k$ as $\boldsymbol{q}$. Then we have that
\begin{equation*}
\begin{aligned}
& \sum_{j=1}^{|\mathcal{R}|}\left(\left\|\boldsymbol{x}_i \mathbf{R}_{j}\right\|_{3}^{3}+\|\boldsymbol{x}_k\|_{3}^{3}\right) \\
=& \sum_{j=1}^{|\mathcal{R}|}\left(\sum_{d=1}^{D}\left\|\mathbf{q}_{: d}\right\|_{3}^{3}+\sum_{i=1}^{I} \sum_{d=1}^{D} \mathbf{p}_{i d}^{3} \mathbf{r}_{j d}^{3}\right) \\
=& \sum_{j=1}^{|\mathcal{R}|} \sum_{d=1}^{D}\left\|\mathbf{q}_{: d}\right\|_{3}^{3}+\sum_{d=1}^{D}\left\|\mathbf{p}_{: d}\right\|_{3}^{3}\left\|\mathbf{r}_{: d}\right\|_{3}^{3} \\
=& \sum_{d=1}^{D}\left(\left\|\mathbf{p}_{: d}\right\|_{3}^{3}\left\|\mathbf{r}_{: d}\right\|_{3}^{3}+|\mathcal{R}|\left\|\mathbf{q}_{: d}\right\|_{3}^{3}\right) \\
\geq & \sum_{d=1}^{D} 2 \sqrt{|\mathcal{R}| } \| \mathbf{p}_{: d}\left\|_{C}\right\| \mathbf{r}_{: d}\left\|_{C}\right\| \mathbf{q}_{: d} \|_{C} \\
=& 2 \sqrt{|\mathcal{R}|} \sum_{d=1}^{D}\left\|\mathbf{p}_{: d}\right\|_{C}\left\|\mathbf{r}_{: d}\right\|_{C}\left\|\mathbf{q}_{: d}\right\|_{C}
\end{aligned}.
\end{equation*}
We can have the equality holds if and only if $\left\|\mathbf{p}_{: d}\right\|_{3}^{3}\left\|\mathbf{r}_{: d}\right\|_{3}^{3}=|\mathcal{R}|\left\|\mathbf{q}_{: d}\right\|_{3}^{3},$ i.e., $\left\|\mathbf{p}_{: d}\right\|_{C}\left\|\mathbf{r}_{: d}\right\|_{C}=\sqrt{|\mathcal{R}|}\left\|\mathbf{q}_{: d}\right\|_{C}$.

For all CP decomposition $\hat{\mathcal{X}}=\sum_{d=1}^{D} \mathbf{p}_{: d} \otimes \mathbf{r}_{: d} \otimes \mathbf{q}_{: d}$, we can always let $\bm{p}'_{:d}=\bm{p}_{:d}$, $\bm{r}'_{:d}=\sqrt{\frac{\left\|\mathbf{q}_{d}\right\|_{C} \sqrt{|\mathcal{R}|}}{\left\|\mathbf{p}_{: d}\right\|_{C}\left\|\mathbf{r}_{: d}\right\|_{C}} }\mathbf{r}_{: d}$ and
$\bm{q}'_{:d}=\sqrt{\frac{\left\|\mathbf{p}_{: d}\right\|_{C}\left\|\mathbf{r}_{: d}\right\|_{C}}{\left\|\mathbf{q}_{\cdot d}\right\|_{C} \sqrt{|\mathcal{R}|}}} \mathbf{p}_{: d}$, then we have
\begin{equation*}
\left\|\mathbf{p}_{: d}^{\prime}\right\|_{C}\left\|\mathbf{r}_{: d}^{\prime}\right\|_{C}=\sqrt{|\mathcal{R}|}\left\|\mathbf{q}_{: d}^{\prime}\right\|_{C}
,
\end{equation*}
and at the same time we  make sure that that $\hat{\mathcal{X}}=\sum_{d=1}^{D} \mathbf{p}_{: d}^{\prime} \otimes \mathbf{r}_{: d}^{\prime} \otimes \mathbf{q}_{: d}^{\prime}$. Therefore, we can have
\begin{equation*}
\begin{aligned}
&\frac{1}{\sqrt{|\mathcal{R}|}} \sum_{j=1}^{|\mathcal{R}|}\left\|\hat{\mathcal{X}}_{j}\right\|_{C} \\=&\frac{1}{2 \sqrt{|\mathcal{R}|}} \sum_{j=1}^{|\mathcal{R}|} \min _{\hat{\mathcal{X}}_{j}=\boldsymbol{x}_i\mathbf{R}_{j}\boldsymbol{x}_k^{\top}}\left(\left\|\boldsymbol{x}_i \mathbf{R}_{j}\right\|_{3}^{3}+\|\boldsymbol{x}_k\|_{3}^{3}\right) \\
 \leq & \frac{1}{2 \sqrt{|\mathcal{R}|}} \min _{\hat{\mathcal{X}}_{j}=\boldsymbol{x}_i\mathbf{ R}_{j} \boldsymbol{x}_k^{\top}} \sum_{j=1}^{|\mathcal{R}|}\left(\left\|\boldsymbol{x}_i \mathbf{R}_{j}\right\|_{3}^{3}+\|\boldsymbol{x}_k\|_{3}^{3}\right) \\
=&\min_{\hat{\mathcal{X}}=\sum_{d=1}^{D} \mathbf{p}_{j} \otimes \mathbf{r}_{i d} \otimes \mathbf{q}_{: d}} \sum_{d=1}^{D}\left\|\mathbf{p}_{: d}\right\|_{C}\left\|\mathbf{r}_{: d}\right\|_{C}\left\|\mathbf{q}_{: d}\right\|_{C} \\
=&\|\hat{\mathcal{X}}\|_{3}.
\end{aligned}
\end{equation*}
And in the same manner, we can have that
\begin{equation*}
\begin{aligned}
\small{
\min}\frac{1}{\sqrt{|\mathcal{R}|}}\sum_{(\boldsymbol{x}_i,\boldsymbol{R}_j,\boldsymbol{x}_k)\in\mathcal{S}}||\boldsymbol{x}_i||_3^3+||\boldsymbol{x}_k||_3^3+\|\boldsymbol{x}_i\mathbf{R}_j-\boldsymbol{x}_k\mathbf{R}_j||_3^3
\\=\|\hat{\mathcal{X}}\|_{3}.
\end{aligned}
\end{equation*}
We can have that the equality holds if and only if $\left\|\mathbf{q}_{: d}\right\|_{C}\left\|\mathbf{r}_{: d}\right\|_{C}=\sqrt{|\mathcal{R}|}\left\|\mathbf{p}_{: d}\right\|_{C}$.
  Then we can see that the conclusion holds if and only if $\left\|\mathbf{p}_{: d}\right\|_{C}\left\|\mathbf{r}_{: d}\right\|_{C}=\sqrt{|\mathcal{R}|}\left\|\mathbf{q}_{: d}\right\|_{C}$ and
$||\bm{q}_{:d}||_C||\bm{r}_{:,d}||_C=\sqrt{|\mathcal{R}|}\left\|\mathbf{p}_{: d}\right\|_{C}, \forall d \in\{1,2, \ldots, D\}$.
Then the proof of Theorem \ref{Theorem3} completes.
\end{proof}

\section{Proof for Theorem \ref{Theorem4}}\label{Proof4t}
\textbf{Theorem 4} Suppose that $\hat{\bm{X}}=\bm{x}_i\overline{\bm{R}}_j\bm{x}^{\top}_k$ for $j=1,2,\cdots,|\mathcal{R}|$, where $\bm{x}_i,\bm{x}_k,\bm{R}_j$ are real matrices and
$\bm{R}_j$ is diagonal. Then, the following equation holds
\begin{equation}\label{bound4}
\begin{aligned}\small{
\frac{1}{4 \sqrt{|\mathcal{R}|}}}\min\sum_{(\boldsymbol{x}_i,\boldsymbol{R}_j,\boldsymbol{x}_k)\in\mathcal{S}}||\boldsymbol{x}_i||_3^3+||\boldsymbol{x}_k||_3^3+\|\boldsymbol{x}_i\mathbf{R}_j+\boldsymbol{x}_k\mathbf{R}_j||_3^3
\\=\|\hat{\mathcal{X}}\|_{3}
\end{aligned}
\end{equation}


First we prove $\|\boldsymbol{x}_i\mathbf{R}_j\|^2|\boldsymbol{x}_k\mathbf{R}_j|+|\boldsymbol{x}_i\mathbf{R}_j|||\boldsymbol{x}_k\mathbf{R}_j||^2\leq
\|\boldsymbol{x}_i\mathbf{R}_j\|_3^3+\|\boldsymbol{x}_k\mathbf{R}_j\|_3^3$ holds.
Notice that $|\boldsymbol{x}_i\mathbf{R}_j|+ |\boldsymbol{x}_k\mathbf{R}_j| \geq 0$ and $(|\boldsymbol{x}_i\mathbf{R}_j|  -|\boldsymbol{x}_k\mathbf{R}_j|)^2\geq 0$. Then  we have
\begin{equation}\label{for inequality a}
\begin{aligned}
(|\boldsymbol{x}_i\mathbf{R}_j|+ |\boldsymbol{x}_k\mathbf{R}_j| )(|\boldsymbol{x}_i\mathbf{R}_j|  -|\boldsymbol{x}_k\mathbf{R}_j|)^2\geq &0\\
(|\boldsymbol{x}_i\mathbf{R}_j|+ |\boldsymbol{x}_k\mathbf{R}_j| )(|\boldsymbol{x}_i\mathbf{R}_j|- |\boldsymbol{x}_k\mathbf{R}_j| )(|\boldsymbol{x}_i\mathbf{R}_j|  -|\boldsymbol{x}_k\mathbf{R}_j|)\geq &0\\
 (|\boldsymbol{x}_i\mathbf{R}_j|^2- |\boldsymbol{x}_k\mathbf{R}_j|^2)(|\boldsymbol{x}_i\mathbf{R}_j |-|\boldsymbol{x}_k\mathbf{R}_j|)\geq&0\\
\end{aligned}
\end{equation}
Then we can derive the following formula:
\begin{equation}
\begin{aligned}
|\boldsymbol{x}_i\mathbf{R}_j|^2(|\boldsymbol{x}_i\mathbf{R}_j|- |\boldsymbol{x}_k\mathbf{R}_j| )  \geq& |\boldsymbol{x}_k\mathbf{R}_j|^2(|\boldsymbol{x}_i\mathbf{R}_j|-|\boldsymbol{x}_k\mathbf{R}_j|)\\
|\boldsymbol{x}_i\mathbf{R}_j|^3-|\boldsymbol{x}_i\mathbf{R}_j|^2|\boldsymbol{x}_k\mathbf{R}_j|\geq& |\boldsymbol{x}_k\mathbf{R}_j|^2|\boldsymbol{x}_i\mathbf{R}_j|-|\boldsymbol{x}_i\mathbf{R}_j|^3\\
\|\small{\boldsymbol{x}_i}\mathbf{R}_j\|_3^3+\|\boldsymbol{x}_k\mathbf{R}_j\|_3^3\geq&\|\small{\boldsymbol{x}_i}\mathbf{R}_j\|^2|\boldsymbol{x}_k\mathbf{R}^{\top}_j|+|\boldsymbol{x}_i\mathbf{R}_j|||\boldsymbol{x}_k\mathbf{R}_j||^2
\end{aligned}
\end{equation}

Then we have:
\begin{equation*}
\begin{aligned}
& \small{\min}\frac{1}{4\sqrt{|\mathcal{R}|}} \sum_{(\boldsymbol{x}_i,\boldsymbol{R}_j,\boldsymbol{x}_k)\in\mathcal{S}}||\boldsymbol{x}_i||_3^3+||\boldsymbol{x}_k||_3^3+\|\boldsymbol{x}_i\mathbf{R}_j+\boldsymbol{x}_k\mathbf{R}_j||_3^3\\
\stackrel{a}{\leq} & \min _{(\boldsymbol{x}_i,\boldsymbol{R}_j,\boldsymbol{x}_k)\in\mathcal{S}} \frac{1}{4\sqrt{|\mathcal{R}|}} \sum_{j=1}^{|\mathcal{R}|}\|\boldsymbol{x}_i\|_{3}^{3}+\|\boldsymbol{x}_k\|_{3}^{3}+\left\|\boldsymbol{x}_i \boldsymbol{R}_{j}\right\|_{3}^{3}\\&+\|\boldsymbol{x}_k \boldsymbol{R}_{j}\|_{3}^{3}+ 3\|\boldsymbol{x}_i\mathbf{R}_j\|^2|\boldsymbol{x}_k\mathbf{R}_j|+3|\boldsymbol{x}_i\mathbf{R}_j|||\boldsymbol{x}_k\mathbf{R}_j||^2 \\
\stackrel{b}{\leq} & \min _{(\boldsymbol{x}_i,\boldsymbol{R}_j,\boldsymbol{x}_k)\in\mathcal{S}} \frac{1}{4\sqrt{|\mathcal{R}|}} \sum_{j=1}^{|\mathcal{R}|}\|\boldsymbol{x}_i\|_{3}^{3}+\|\boldsymbol{x}_k\|_{3}^{3}+\left\|\boldsymbol{x}_i \boldsymbol{R}_{j}\right\|_{3}^{3}\\&+\|\boldsymbol{x}_k \boldsymbol{R}_{j}\|_{3}^{3}+ 3\left\|\boldsymbol{x}_i \boldsymbol{R}_{j}\right\|_{3}^{3}+3\|\boldsymbol{x}_k \boldsymbol{R}_{j}\|_{3}^{3} \\
 \leq& \min _{(\boldsymbol{x}_i,\boldsymbol{R}_j,\boldsymbol{x}_k)\in\mathcal{S}} \frac{1}{4\sqrt{|\mathcal{R}|}} \sum_{j=1}^{|\mathcal{R}|}\|\boldsymbol{x}_i\|_{3}^{3}+\|\boldsymbol{x}_k\|_{3}^{3}+4\left\|\boldsymbol{x}_i \boldsymbol{R}_{j}\right\|_{3}^{3}\\&+4\|\boldsymbol{x}_k \boldsymbol{R}_{j}\|_{3}^{3}\\
\leq & \min _{(\boldsymbol{x}_i,\boldsymbol{R}_j,\boldsymbol{x}_k)\in\mathcal{S}} \small{\frac{1}{\sqrt{|\mathcal{R}|}}} \sum_{j=1}^{|\mathcal{R}|}\|\boldsymbol{x}_i\|_{3}^{3}+\|\boldsymbol{x}_k\|_{3}^{3}+\left\|\boldsymbol{x}_i \boldsymbol{R}_{j}\right\|_{3}^{3}+\|\boldsymbol{x}_k \boldsymbol{R}_{j}\|_{3}^{3}\\
\end{aligned}
\end{equation*}
  Since  $\boldsymbol{x}_i \boldsymbol{R}_{j}$ and $\boldsymbol{x}_k \boldsymbol{R}_{j}$  are all vectors, we can have $\|\boldsymbol{x}_i \boldsymbol{R}_{j}\boldsymbol{x}_k \boldsymbol{R}_{j}\|=\|\boldsymbol{x}_k \boldsymbol{R}_{j}\boldsymbol{x}_i \boldsymbol{R}_{j}\|$. Then the inequality (a) holds.
The inequality (b) holds due to the Eq.(\ref{for inequality a}). 

Then in the same manner  with Eq.(\ref{center formular1}), we can have that
\begin{equation*}
\begin{aligned}
\small{\min}\frac{1}{4\sqrt{|\mathcal{R}|}}\sum_{(\boldsymbol{x}_i,\boldsymbol{R}_j,\boldsymbol{x}_k)\in\mathcal{S}}||\boldsymbol{x}_i||_3^3+||\boldsymbol{x}_k||_3^3+\|\boldsymbol{x}_i\mathbf{R}_j+\boldsymbol{x}_k\mathbf{R}_j||_3^3
\\=\|\hat{\mathcal{X}}\|_{3}.
\end{aligned}
\end{equation*}
We can have that the equality holds if and only if $\left\|\mathbf{q}_{: d}\right\|_{C}\left\|\mathbf{r}_{: d}\right\|_{C}=\sqrt{|\mathcal{R}|}\left\|\mathbf{p}_{: d}\right\|_{C}$.
  Then we can see that the conclusion holds if and only if $\left\|\mathbf{p}_{: d}\right\|_{C}\left\|\mathbf{r}_{: d}\right\|_{C}=\sqrt{|\mathcal{R}|}\left\|\mathbf{q}_{: d}\right\|_{C}$ and
$||\bm{q}_{:d}||_C||\bm{r}_{:,d}||_C=\sqrt{|\mathcal{R}|}\left\|\mathbf{p}_{: d}\right\|_{C}, \forall d \in\{1,2, \ldots, D\}$.
Then the proof of Theorem \ref{Theorem4} completes.

\section{Experimental Details and Appendix}
We implement our model using PyTorch and test it on a single GPU.
Here Table \ref{data} shows statistics of the datasets used in this paper. 
The hypermeters for CP, ComplEx, RESCAL RotatE models are shown in Table \ref{Hyperparameters found by grid search for CP model}, Table \ref{Hyperparameters found by grid search for ComplEx model}, Table \ref{Hyperparameters found by grid search for RESCAL model} and Table \ref{Hyperparameters found by grid search for RotatE model} respectively. We have counted the running time of each epoch for different models with ER in WN18RR as follows: CP with ER takes 58s, ComplEx with ER takes 84s and RESCAL with ER takes 73s.

\noindent\textbf{Study on semantic-similarity hyperparameter $\epsilon$.} In the experiments above, we provide the semantic-similarity parameter $\epsilon_j$ for each relation $\bm{R}_j$ in ER.
 To characterize the similarity between entities adequately and study the impact of $\epsilon$,  here we also conduct another version of ER where we provide $\epsilon_{ik}$ for  $a_{ik}$ (in Eq.(\ref{two-order Semantic Relevance})), which we denote as $ER*$. From Table \ref{eplison}, we can see $ER$ and  $ER*$ have similar performance. It shows providing $\epsilon_j$ for each relation $\bm{R}_j$ in ER is proper.

\begin{table}
	\centering
	\begin{tabular}{lccc}
  \toprule
Datasets    &WN18RR &FB15K237 &YAGO3-10\\
  \midrule
dimension & 2000 &2000 & 2000\\
batch size&100 &100 &500\\
learning rate&0.1&0.05&0.1\\
  \bottomrule
	\end{tabular}
\caption{Hyperparameters found by grid search for CP mdoel.}\label{Hyperparameters found by grid search for CP model}
\end{table}

\begin{table}
	\centering
	\begin{tabular}{lccc}
  \toprule
Datasets    &WN18RR &FB15K237 &YAGO3-10\\
  \midrule
dimension & 2000 &2000 & 2000\\
batch size&200 &200 &1000\\
learning rate&0.05&0.1&0.05\\
  \bottomrule
	\end{tabular}
\caption{Hyperparameters found by grid search for ComplEx mdoel.}\label{Hyperparameters found by grid search for ComplEx model}
\end{table}

\begin{table}
	\centering
	\begin{tabular}{lccc}
  \toprule
Datasets    &WN18RR &FB15K237 &YAGO3-10\\
  \midrule
dimension & 512 &512 & 512\\
batch size&400 &400 &1000\\
learning rate&0.1&0.1&0.05\\
  \bottomrule
	\end{tabular}
\caption{Hyperparameters found by grid search for RESCAL mdoel.}\label{Hyperparameters found by grid search for RESCAL model}
\end{table}

\begin{table}
	\centering
	\begin{tabular}{lccc}
  \toprule
Datasets   &WN18RR &FB15K237 &YAGO3-10\\
  \midrule
dimension & 400 &400 & 400\\
batch size&100 &100 &500\\
learning rate&0.1&0.05&0.05\\
  \bottomrule
	\end{tabular}
\caption{Hyperparameters found by grid search for RotatE mdoel.}\label{Hyperparameters found by grid search for RotatE model}
\end{table}

\begin{table}
	\centering
	\begin{tabular}{lccc}
  \toprule
Model    &MRR &Hits@1 &Hits@10\\
  \midrule
ComplEx-ER &.374 &.282 &.563\\
ComplEx-ER*&.375 &.282 &.565\\
  \bottomrule
	\end{tabular}
\caption{Evaluation results of $\epsilon$ on FB15K237.}\label{eplison}
\end{table}

\begin{table}[t]
\begin{center}
\setlength{\tabcolsep}{1mm}\label{Table5}
 \begin{tabular}{lccccc}
  \toprule
Dataset & $\#$Entity & $\#$Relation & $\#$Training & $\#$Valid & $\#$Test \\
  \midrule
  WN18RR & 40,943& 11& 86,835&3,034&3,134\\
FB15K237& 14,541& 237& 272,115& 17,535& 20,466\\
YAGO3-10 & 123,182 &37& 1,079,040& 5,000& 5,000\\
  \bottomrule
 \end{tabular}
 \caption{\label{data}Statistics of the datasets used in this paper.
}
 \end{center}
\end{table}

\begin{table}
	\centering
	\begin{tabular}{lccc}
  \toprule
Model    &MRR &Hits@1 &Hits@10\\
  \midrule
ComplEx$_0$ & .355&.263&.542\\
ComplEx$_1$&.374 &.282 &.563\\
ComplEx$_2$&.378&.284&.569\\
  \bottomrule
	\end{tabular}
\caption{Evaluation results on FB15K237.}\label{complex1-4}
\end{table}
\begin{figure}
	\centering
\includegraphics[width=2.6in]{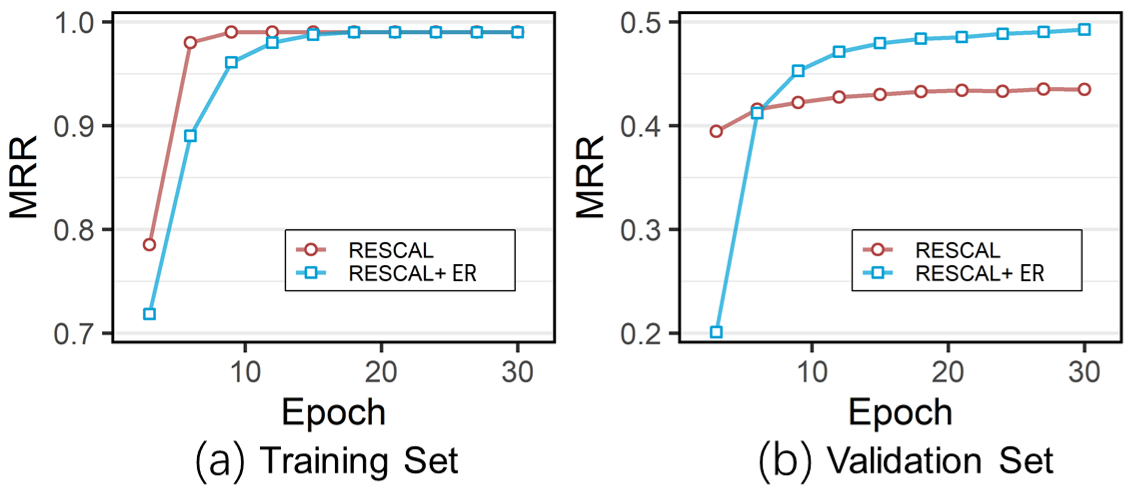}
\caption{Study of training and validation curves.}\label{train_validation}
\end{figure}
\begin{table*}
\begin{center}
 \begin{tabular}{lccccccccc}
  \toprule
  &  \multicolumn{3}{c}{WN18RR}&\multicolumn{3}{c}{FB15K-237}&\multicolumn{3}{c}{YAGO3-10}\\
  Models  &MRR &Hits@1 &Hits@10  &MRR &Hits@1 &Hits@10&MRR &Hits@1 &Hits@10\\
  \midrule
  CP-FRO&.460& - &.480& .340& -& .510& .540& - &.680\\
  CP-N3 &.470 &.430& .544& .354& .261& .544& .577& .505& .705\\
  CP-DURA &.478& .441& .552& .367& .272& .555& .579& .506& .709\\
  CP-ER&\textbf{.482}& \textbf{.444}& \textbf{.557}& \textbf{.371}&\textbf{.275}& \textbf{.561}& \textbf{.584}& \textbf{.508}&\textbf{.712}\\
  \midrule
 ComplEx-FRO& .470& -& .540& .350& -& .530& .573& -& .710\\
 ComplEx-N3 &.489 &.443 &.580 &.366 &.271 &.558 &.577 &.502 &.711\\
 ComplEx-DURA &.491 &.449 &.571 &.371 &.276 &.560 &.584 &.511 &.713\\
 ComplEx-ER& \textbf{.494}& \textbf{.453}& \textbf{.575}& \textbf{.374}& \textbf{.282}& \textbf{.563}& \textbf{.588}& \textbf{.515}&\textbf{.718}\\
   \midrule
   RESCAL-FRO &.397 &.363 &.452 &.323 &.235 &.501 &.474 &.392 &.628\\
   RESCAL-DURA &.498 &.455 &.577 &.368 &.276 &.550 &.579 &.505 &.712\\
   RESCAL-ER& \textbf{.499}& \textbf{.458}& \textbf{.582}& \textbf{.373}& \textbf{.281}& \textbf{.554}&\textbf{.583}& \textbf{.509} &\textbf{.715}\\
  \bottomrule
 \end{tabular}
  \caption{Comparison between DURA, the squared Frobenius norm (FRO), and the nuclear 3-norm
(N3) regularizers (N3 does not apply to RESCAL).  
The
best performance on each model are marked in bold.}
\label{tensor models}
 \end{center}
\end{table*}

\begin{table*}[ht]
\begin{center}
\setlength{\belowcaptionskip}{-0.cm}
 \begin{tabular}{lccccccccc}
  \midrule
  &  \multicolumn{3}{c}{WN18RR}&\multicolumn{3}{c}{FB15K-237}&\multicolumn{3}{c}{YAGO3-10}\\
Models    &MRR &Hits@1 &Hits@10  &MRR &Hits@1 &Hits@10&MRR &Hits@1 &Hits@10\\
    \midrule
 TransE-FRO &.259&.105&.532&.327&.231&.519&.478&.377& .665\\
 TransE-N3& .265& .107&.533&.328&.232&.518&.483&\textbf{.385}&.664\\
 TransE-DURA& .260& .105& .531&.328&.233&.518&.475&.371&.666 \\
 TransE-ER&\textbf{.268}&\textbf{.110}&\textbf{.536}&\textbf{.329}&\textbf{.235}&\textbf{.525}&\textbf{.489}& .384& \textbf{.669}  \\
  \midrule
 RotatE-FRO& .481&.434&.572& .337&.242& .528& .570 &.481& .680\\
 RotatE-N3 &.483& .440& .580&346 &.251 &.538 &.574 &.498  &.701 \\
 RotatE-DURA &.487&.443&.580&.342&.246& .533&.567 &.491  &.702 \\
 RotatE-ER&\textbf{.490}&\textbf{.445}&\textbf{.581}&\textbf{.352}&\textbf{.255}& \textbf{.547}& \textbf{.581} &\textbf{.505}  &\textbf{.704}\\
  \bottomrule
 \end{tabular}
  \caption{Comparison between DURA, the squared Frobenius norm (FRO), and the nuclear 3-norm
(N3) regularizers. The
best performance on each model are marked in bold.}
\label{distance models}
 \end{center}
\end{table*}

\begin{table*}[ht]
\begin{center}
 \begin{tabular}{lccccccccc}
  \toprule
  &  \multicolumn{3}{c}{WN18RR}&\multicolumn{3}{c}{FB15K-237}&\multicolumn{3}{c}{YAGO3-10}\\
  Models  &MRR &Hits@1 &Hits@10  &MRR &Hits@1 &Hits@10&MRR &Hits@1 &Hits@10\\
   \midrule
ComplEx-RHE&.469& .430& .538& .348&.262& .542& .570& .501&.708\\
ComplEx-LLE  &.477& .442& .551& .363& .271& .552& .576& .504& .701\\
ComplEx-TFR &.473&.441& .545& .358& .264&.541&.573& .502& .702\\
ComplEx-EIA  &.463 &.345 &.542 &.356 &.266 &.529 &.573 &.501 &.703\\
ComplEx-Pretrain &.479 &.440&.553 & .353 & .268&.533& .578& .502 &.704\\
ComplEx-ER&.494&.453&.575&.374&.282&.563&.588& .515&.718\\
  \bottomrule
 \end{tabular}
  \caption{Evaluation results of different models on WN18RR, FB15k-237 and YAGO3-10 datasets. 
 }
\label{Comparison of semantic information models}
 \end{center}
\end{table*}

\begin{table*}[th]
\begin{center}
 \begin{tabular}{lccccccccc}
  \toprule
  &  \multicolumn{3}{c}{WN18RR}&\multicolumn{3}{c}{FB15K-237}&\multicolumn{3}{c}{YAGO3-10}\\
  Models  &MRR &Hits@1 &Hits@10  &MRR &Hits@1 &Hits@10&MRR &Hits@1 &Hits@10\\
   \midrule
CP-ER&.479& .441& .556& .371&.273& .560& .582& .506&.709\\
 ComplEx-ER& .492& .452& .574& .371& .275& .560& .586& .514&.712\\
  \bottomrule
 \end{tabular}
  \caption{Evaluation results of ER based on
3-norm on WN18RR, FB15k-237 and YAGO3-10 datasets. 
 }
\label{3-norm MARA with all models}
 \end{center}
\end{table*}

\end{document}